\title{Toward Falsifying Causal Graphs Using a Permutation-Based Test}
\author{
    Elias Eulig\textsuperscript{\rm 1, 2,}\thanks{Work done at Amazon Research Tübingen.}\textsuperscript{\rm ,}\thanks{Correspondence to: \texttt{elias.eulig@dkfz.de}.}\quad
    Atalanti A. Mastakouri\textsuperscript{\rm 3}\quad 
    Patrick Blöbaum\textsuperscript{\rm 3} \\
    Michaela Hardt\textsuperscript{\rm 4,\textasteriskcentered}\quad
    Dominik Janzing\textsuperscript{\rm 3}
}
\let\oldFootnote\footnote
\newcommand\nextToken\relax
\renewcommand\footnote[1]{%
    \oldFootnote{#1}\futurelet\nextToken\isFootnote}
\newcommand\isFootnote{%
    \ifx\footnote\nextToken\textsuperscript{,}\fi}
\theoremstyle{plain}
\newtheorem{theorem}{Theorem}
\newtheorem{proposition}{Proposition}
\theoremstyle{definition}
\newtheorem{definition}{Definition}
\newtheorem{example}{Example}
\newtheorem{task}{Task}
\tikzset{
    -Latex,auto,node distance =1 cm and 1 cm,semithick,
                                          state/.style ={font=\small, ellipse, draw, minimum width = 0.1 cm},
point/.style = {circle, draw, inner sep=0.04cm,fill,node contents={}},
bidirected/.style={Latex-Latex,dashed},
                  el/.style = {inner sep=2pt, align=left, sloped}
}
\newcommand{\inlinearc}[2]{\kern-5pt\begin{tikzpicture}[baseline=-0.5ex]%
\node (A) {};%
\node (B) [right of=A, node distance=#1] {};%
\path (A) edge[#2] (B);%
\end{tikzpicture}\kern-5pt}
\newcommand{\graph}{{\mathcal{G}}} %
\newcommand{\Ggiven}{{\hat{\mathcal{G}}}} %
\newcommand{\Gtrue}{{\mathcal{G}^{*}}} %
\newcommand{\verts}{\bm{V}} %
\newcommand{\edges}{\mathcal{E}} %
\newcommand{\Egiven}{\hat{\mathcal{E}}} %
\newcommand{\Etrue}{\mathcal{E}^{*}} %
\newcommand{\vars}{\bm{X}} %
\newcommand{\an}[2]{\text{Anc}_{#1}^{#2}} %
\newcommand{\nd}[2]{\text{ND}_{#1}^{#2}} %
\newcommand{\Pa}[1]{\text{Pa}_{{#1}}} %
\newcommand{\pa}[2]{\text{Pa}_{#1}^{#2}} %
\newcommand{\tpa}[1]{\text{T}^{#1}_\text{Pa}} %
\newcommand{\CI}{\mathrel{\text{\scalebox{1.07}{$\perp\mkern-10mu\perp$}}}}
\newcommand{\nCI}{\centernot{\CI}\mkern-4mu}
\newcommand{\joint}{P} %
\newcommand{\data}{\mathcal{D}} %
\newcommand{\thgiven}{\hat{\bm{\theta}}} %
\newcommand{\thtrue}{\bm{\theta}^{*}} %
\newcommand{\vmd}[1][\@nil]{%
  \def\tmp{#1}%
   \ifx\tmp\@nnil
        V_{\text{MD}}
    \else
        V_{\text{MD}}^{{#1}}
    \fi}
\newcommand{\fmd}[1][\@nil]{%
  \def\tmp{#1}%
   \ifx\tmp\@nnil
        \phi_{\text{MD}}
    \else
        \phi_{\text{MD}}^{{#1}}
    \fi}
\newcommand{\vlmc}[1][\@nil]{%
  \def\tmp{#1}%
   \ifx\tmp\@nnil
        V_{\text{LMC}}
    \else
        V_{\text{LMC}}^{{#1}}
    \fi}
\newcommand{\flmc}[1][\@nil]{%
  \def\tmp{#1}%
   \ifx\tmp\@nnil
        \phi_{\text{LMC}}
    \else
        \phi_{\text{LMC}}^{{#1}}
    \fi}
\newcommand{\vtpa}[1][\@nil]{%
  \def\tmp{#1}%
   \ifx\tmp\@nnil
        V_{\text{TPa}}
    \else
        V_{\text{TPa}}^{{#1}}
    \fi}
\DeclareRobustCommand\onedot{\futurelet\@let@token\@onedot}
\def\@onedot{\ifx\@let@token.\else.\null\fi\xspace}
\def\eg{e.g\onedot} \def\Eg{E.g\onedot}
\def\ie{i.e\onedot} 
\def\cf{c.f\onedot} 
\def\wrt{w.r.t\onedot}
\def\iid{i.i.d\onedot}
\newcommand{\figref}[1]{Fig.~\ref{#1}}
\newcommand{\tabref}[1]{Tab.~\ref{#1}}
\newcommand{\secref}[1]{Sec.~\ref{#1}}
\newcommand{\suppref}[1]{Supp.~\ref{#1}}
\newcommand{\thref}[1]{Theorem~\ref{#1}}
\newcommand{\defref}[1]{Definition~\ref{#1}}
\newcommand{\propref}[1]{Proposition~\ref{#1}}
\newcommand{\taskref}[1]{Task~\ref{#1}}
\newcommand{\continuation}{??}
\newcommand{\figrefs}[1]{Fig.\ \labelcref{#1}}
\newcommand{\tabrefs}[1]{Tab.\ \labelcref{#1}}
\newcommand{\defrefs}[1]{Definition \labelcref{#1}}
\newtheorem{Assumption}{Assumption}
\begin{document}

\maketitle
\begin{abstract}
\looseness=-1Understanding causal relationships among the variables of a system is paramount to explain and control its behavior. For many real-world systems, however, the true causal graph is not readily available and one must resort to predictions made by algorithms or domain experts. Therefore, metrics that quantitatively assess the goodness of a causal graph provide helpful checks before using it in downstream tasks. Existing metrics provide an \textit{absolute} number of inconsistencies between the graph and the observed data, and without a baseline, practitioners are left to answer the hard question of how many such inconsistencies are acceptable or expected. Here, we propose a novel consistency metric by constructing a baseline through node permutations. By comparing the number of inconsistencies with those on the baseline, we derive an interpretable metric that captures whether the graph is significantly better than random. Evaluating on both simulated and real data sets from various domains, including biology and cloud monitoring, we demonstrate that the true graph is not falsified by our metric, whereas the wrong graphs given by a hypothetical user are likely to be falsified.
\end{abstract}

\section{Introduction}\label{sec:introduction}
Directed Acyclic Graphs (DAGs) are a core concept of causal reasoning as they form the basis of structural causal models (SCMs) which can be used to predict the effect of interventions in a causal system or even to answer counterfactual questions. For that reason, they have numerous applications, including biology~\citep{sachs2005,huber2007,pingault2018}, medicine~\citep{shrier2008} and computer vision~\citep{wang2013,chalupka2015,wang2020}.

\begin{figure}[tb]
    \centering
    \includegraphics[width=1.0\linewidth]{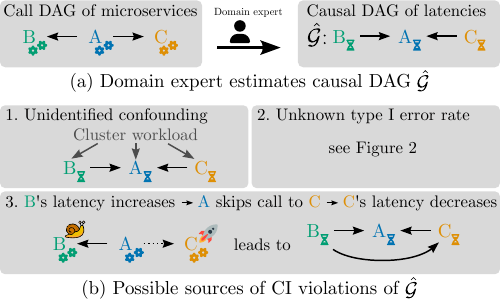}
    \vspace{-\baselineskip}
    \caption{\label{fig:examplefig} (a) For monitoring micro-service architectures, domain experts may invert the call graph to obtain a causal graph of latencies and error rates \citep{Budhathoki2022}; (b) However, there are multiple reasons why this graph might violate independence statements on observed data.}
\end{figure}

Nevertheless, for many real-world systems the true causal relationships, represented in the form of a DAG, are often not readily available. If randomized controlled trials are not possible, inferring the DAG from passive observational data alone is a hard problem that rests on strong assumptions on statistical properties (e.g. causal faithfulness), functional relationships, noise distributions, or graph constraints \citep{shimizu2006,peters2014,chickering1996,chickering2004,claassen2013,mooij2016}. These assumptions are often violated in practice. Instead of relying on discovery from observational data, domain experts can describe known dependencies in a system (\figref{fig:examplefig}; (a)). This approach, however, is subject to human error and incomplete knowledge. This is particularly problematic for applications in causal inference, where a wrong graph structure will lead to wrong conclusions about the effect of interventions.

Efforts towards quantitatively evaluating the consistency of a given graph (either originating from a domain expert or a causal discovery algorithm) using observational data alone are of utmost importance. However, existing attempts towards this direction report the raw number of inconsistencies between DAG and observed data without a baseline of how many such inconsistencies are to be expected in the first place \citep{textor2016,reynolds2022}. Striving for zero is unrealistic and the acceptable number of violations depends on many factors including the size and complexity of the graph. On real-world data, the fraction of conditional independence violations for expert-elicited graphs can be surprisingly high due to unidentified confounding, high type I error rates of the conditional independence tests \citep{shah2020}, or complex interactions between the variables (\figref{fig:examplefig}; (b)). Other methods require the existence and knowledge of DAGs from a related system \citep{pitchforth2013}, or are similarly difficult to interpret due to a missing baseline \citep{madigan1995,friedman2003}.

To overcome these drawbacks, in this work we develop a novel metric\footnote{An implementation is available in the Python package \textit{DoWhy} \cite{bloebaum2024}: \url{https://github.com/py-why/dowhy}.} to evaluate a given graph using observational data alone. Aiming to understand if a number of violations of a user-specified graph is high or low, we compare it against a baseline that we construct by randomly permuting its nodes. Through this comparison we can shed light onto the question if the violations of a user-specified graph are false-positives or point to real deficiencies of the graph.

\section{Related work}\label{sec:related_work}
\begin{figure}[tb]
    \centering
    \includegraphics[scale=0.75]{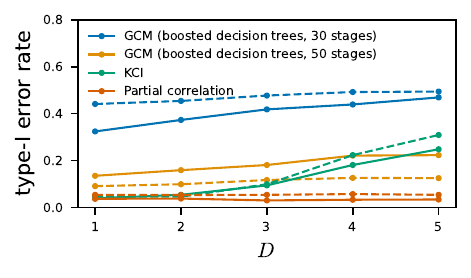}
    \vspace{-0.5\baselineskip}
    \caption{\label{fig:type1_ci} Type I error rate at $\alpha=5\%$ for different sizes $D$ of the conditioning set for one parametric (partial correlation) and two nonparametric CI tests, KCI~\citep{zhang2011} and GCM~\citep{shah2020}. Data (solid: $N$ = 100, dashed: $N$ = 500) were sampled from gaussian-linear conditionals. More details are given in \suppref{sec:app_type_1_ci}.}
\end{figure}

In this section, we review existing works that aim to quantify the consistency of a given DAG with observed data.

The R package \textit{dagitty} \citep{textor2016} implements functions to evaluate DAG-dataset consistency by testing conditional independence (CI) relations implied by the graph structure via the d-separation \citep{pearl1988} and global Markov condition. However, directly using the number of violations of graph-implied CIs as a metric to evaluate a given graph is not suitable for real-world applications because there exists no nonparametric CI test with valid level\footnote{Valid test would mean that for some CI test and sample size $N$, any distribution from the null is rejected at most with probability of the prespecified significance level $\alpha$ \citep{shah2020}.} over all distributions and thus the probability of type I errors remains unknown in practice \citep[Th. 2]{shah2020}, \cf \figref{fig:type1_ci}. Therefore, without a baseline comparison, the raw number of violations (absolute or fraction) does not provide the user with a meaningful measure of whether or not the observed inconsistencies of the given graph are significant.

\citet{reynolds2022} validate a given DAG (6 nodes, 8 edges) that relates exposure to spaceflight environment to the performance and health outcomes of rats and mice. Together with testing CIs implied by d-separations using \textit{dagitty}, they test whether dependencies implied by the graph lead to marginal dependencies in the observations (faithfulness). Similar to \cite{textor2016} interpreting the results of those tests is challenging, without a baseline comparison. Our metric overcomes this drawback by providing such a baseline comparison to estimate whether the observed inconsistencies are significant.

\citet{pitchforth2013} suggest a number of questions to validate expert-elicited Bayesian Networks (BNs). In particular, the authors propose to validate a given BN by verifying that it is similar to BNs from the same domain already established in the literature. Nevertheless, the framework does not provide a quantified measure and many of the questions assume the existence of comparison models in the literature, which may not be available in many domains. In contrast, we propose a quantitative metric, constructed via a surrogate baseline and not reliant on the existence of similar models from the same domain.

Finally, numerous works exist on Bayesian structure learning \citep[e.g.,][]{madigan1995,friedman2003,giudici2003} where the posterior $p(\graph \mid \mathcal{D})\propto p(\mathcal{D}\mid \graph)~p(\graph)$ of a graph $\graph$ is estimated given some data $\mathcal{D}$. While the likelihood $p(\mathcal{D} \mid \graph)$ could be used as a measure for the goodness of $\graph$, the problem of a missing baseline remains.

\section{Background}\label{sec:background}
\subsection{Notation}\label{sec:notation}
This work aims to evaluate the consistency of a given DAG $\Ggiven = (\verts, \Egiven)$ with vertices $\verts=\{1, ..., n\}$, random variables $\vars = \{X_i: i \in \verts\}$, and edges $\Egiven \subseteq \verts^2$ using observations $\data=\{\vars^{(1)}, ..., \vars^{(N)}\}$ \iid sampled from the joint distribution\footnote{In the following, we will assume that this $\joint$ has a density (\wrt a product measure).} $\joint(\vars)$. We assume the existence of some unknown true causal DAG $\Gtrue$ and that $\joint(\vars)$ satisfies the causal Markov condition relative to $\Gtrue$.

In the following, we refer to $X \CI_\data Y | Z$ as the outcome of a CI test using $\data$, where $X$ and $Y$ are conditionally independent given $Z$ (and likewise $X \nCI_\data Y | Z$ to denote conditional dependency). Note that whether independence is rejected is both a property of $\data$ and the choice of a particular CI test. In this work, to denote that a set of nodes $Z$ d-separates~\citep{pearl1988} $X$ from $Y$, we write: $X \CI_\graph Y | Z$.

For some graph $\graph=(\verts, \edges)$, we call a node $i$ a parent of node $j$ if $(i,j)\in\edges$ and denote with $\pa{j}{\graph}$ the set of all parents of $j\in\graph$ ($\pa{j}{\graph}=\emptyset$ iff $j$ is a root node). Furthermore, for nodes $i,j$ if there exists a direct path $i\to\dots\to j$ we call $i$ an ancestor of node $j$ and $j$ a descendant of $i$. We denote with $\an{j}{\graph}$ the set of all ancestors of $j\in\graph$ and with $\nd{i}{\graph}$ the set of all non-descendants of $i\in\graph$. 

\subsection{Validating local Markov conditions}\label{sec:validation_lmc}
One common approach to evaluate a DAG using observed data is by means of statistical testing of independence relations implied by $\Ggiven$ \citep[\eg][]{textor2016, reynolds2022}. In the following, we will formally introduce this test and elaborate on why it is unsuitable to use as a metric directly. In \secref{sec:node_permutation} we will then present a baseline to overcome said drawbacks.

One of the standard assumptions in causal inference is the causal Markov condition, which allows us to factorize a joint probability distribution $\joint(\vars)$ over the variables $\vars$ of $\graph$ into the product

\begin{align}
\joint(\vars) = \joint(X_1, X_2, ..., X_n) = \prod_{i=1}^n \joint(X_i | \pa{i}{\graph})~.
\end{align}
We can equivalently formulate this as the parental Markov condition:
\begin{theorem}[Parental Markov condition~\citep{pearl2009}]\label{th:lmc}
A probability distribution $P$ is Markov relative to a DAG $\graph = (\verts, \edges)$, iff $X_i \CI_P \nd{i}{\graph} \setminus\pa{i}{\graph} \mid \pa{i}{\graph}$.
\end{theorem}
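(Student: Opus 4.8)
The plan is to prove both implications by fixing a topological (ancestral) ordering of $\graph$ and combining the chain rule of probability with, in one direction, the assumed factorization and, in the other, the decomposition property of conditional independence; this is the classical argument for the equivalence of the recursive factorization and the ordered local Markov property.

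\emph{Factorization $\Rightarrow$ parental Markov condition.} Suppose $P(\verts)=\prod_{i}P(X_i\mid\pa{i}{\graph})$ and fix a node $i$. First I would observe that $\graph$ admits a topological order in which \emph{every} non-descendant of $i$ precedes $i$ and every descendant of $i$ follows it: no parent of $i$ is a descendant of $i$ (else $\graph$ contains a cycle), so $i$ is a source of the subgraph induced on $\{i\}$ together with the descendants of $i$; and there is no edge from that descendant set into $\nd{i}{\graph}$, since such an edge would produce a descendant of $i$ lying outside the descendant set. Hence one may list $\nd{i}{\graph}$ first, then $i$, then the descendants of $i$, in an order that is topological for $\graph$. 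Relabel this order as $X_1,\dots,X_n$, so that $X_i$ sits in position $i$ and $\{X_1,\dots,X_{i-1}\}=\nd{i}{\graph}$. Now I would peel factors off the joint: integrating out the last variable $X_n$ contributes $\int P(X_n\mid\pa{n}{\graph})\,dX_n=1$, because being last $X_n$ is a parent of no variable and so occurs in no other factor; iterating this down through $X_{i+1}$ gives $P(X_1,\dots,X_i)=\prod_{j\le i}P(X_j\mid\pa{j}{\graph})$, and one further step gives $P(X_1,\dots,X_{i-1})=\prod_{j\le i-1}P(X_j\mid\pa{j}{\graph})$. Dividing yields $P(X_i\mid X_1,\dots,X_{i-1})=P(X_i\mid\pa{i}{\graph})$, which, since $\{X_1,\dots,X_{i-1}\}=\nd{i}{\graph}$, is precisely $X_i\CI_P\nd{i}{\graph}\setminus\pa{i}{\graph}\mid\pa{i}{\graph}$.

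\emph{Parental Markov condition $\Rightarrow$ factorization.} Assume $X_i\CI_P\nd{i}{\graph}\setminus\pa{i}{\graph}\mid\pa{i}{\graph}$ for all $i$, pick any topological order $X_1,\dots,X_n$ of $\graph$, and expand $P(\verts)=\prod_i P(X_i\mid X_1,\dots,X_{i-1})$ by the chain rule. Since all descendants of $i$ come after $i$ in a topological order, $\{X_1,\dots,X_{i-1}\}\subseteq\nd{i}{\graph}$ and $\pa{i}{\graph}\subseteq\{X_1,\dots,X_{i-1}\}$. Applying the decomposition axiom of conditional independence to the hypothesis, with the sub-collection $\{X_1,\dots,X_{i-1}\}\setminus\pa{i}{\graph}$ of $\nd{i}{\graph}\setminus\pa{i}{\graph}$, gives $X_i\CI_P\{X_1,\dots,X_{i-1}\}\setminus\pa{i}{\graph}\mid\pa{i}{\graph}$, that is, $P(X_i\mid X_1,\dots,X_{i-1})=P(X_i\mid\pa{i}{\graph})$; substituting into the chain-rule product yields the factorization.

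The chain rule and the decomposition axiom are routine, so I expect the only real obstacle to lie in the ``$\Rightarrow$'' direction: producing a topological order that puts \emph{all} of $\nd{i}{\graph}$ before $i$ (a plain topological order only delivers independence of $X_i$ from its \emph{predecessors}, a possibly strict subset of $\nd{i}{\graph}$), together with verifying that the successive marginalizations genuinely collapse to $1$. Since the paper assumes $P$ has a density with respect to a product measure, the conditional densities and quotients used above are well defined and the stated identities are to be read $P$-almost everywhere, so no further measure-theoretic machinery is required.
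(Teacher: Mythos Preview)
Your argument is correct and is the classical proof of the equivalence between the recursive factorization property and the ordered/parental local Markov property. However, the paper does not itself prove \thref{th:lmc}: it is stated with a citation to \cite{pearl2009} and used as background. So there is no ``paper's own proof'' to compare against; you have supplied what the paper simply imports.

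For completeness, a brief check of the two points you flag as potential obstacles: your construction of a topological order placing all of $\nd{i}{\graph}$ before $i$ is sound (the key fact, that no edge goes from a descendant of $i$ back into $\nd{i}{\graph}$, is exactly what makes the induced subgraph on $\{i\}\cup$descendants a ``downset'' that can be listed last), and the successive marginalizations do collapse to $1$ under the density assumption the paper explicitly makes in its footnote in \secref{sec:background}. The reverse direction via the chain rule and decomposition is routine, as you note.
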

For the remainder of this work we will use the terms parental Markov condition and local Markov condition (LMC) interchangeably.
To test whether a distribution satisfies the parental Markov condition relative to a DAG we can thus list the CIs entailed by the DAG via the graphical d-separation criterion~\citep{pearl1988} and test whether those are satisfied or violated by the distribution at hand.
\begin{definition}[Parental triples]\label{def:parental_triple}
For some graph $\graph$, we refer to the ordered triple $(i, j\in\nd{i}{\graph}\setminus\pa{i}{\graph}, Z=\pa{i}{\graph})$ as parental triple and denote the set of all such triples implied by $\graph$ as $\tpa{\graph}$.
\end{definition}
According to \thref{th:lmc} every parental triple $(i,j,Z)\in \tpa{\graph}$ implies the CI $X_i \CI_P X_j \mid Z$. Similar to  \citet{textor2016} we will now introduce the notion of violations of LMC:
\begin{definition}[Violations of LMCs]\label{def:violations_lmc}
We denote with $\vlmc[\Ggiven,\data]$ the set of triples $(i, j, Z) \in \tpa{\Ggiven}$, for which $(i,j)$ is an ordered pair, and we observe LMC violations on data $\data$, \ie
\begin{align}
     \vlmc[\Ggiven,\data] = \left\{ (i, j, Z) \in \tpa{\Ggiven}:  X_i \nCI_\data X_j \mid Z \right\}~. \label{eq:V_lmc}
\end{align}
Furthermore, we denote the fraction of LMC violations with %
\begin{align}
     \flmc[\Ggiven,\data] = \frac{| \vlmc[\Ggiven,\data] |}{|\tpa{\Ggiven}|}~.\label{eq:rel_lmc}
\end{align}
\end{definition}
Using the two metrics from \defref{def:violations_lmc} directly to measure the goodness of a user-given graph is not suitable for real-world applications due to the reasons detailed in \secref{sec:related_work}, particularly the unknown type I error rate of a CI test (\cf~\citet{shah2020} \& \figref{fig:type1_ci}).

\section{A baseline for violations of local Markov conditions}\label{sec:node_permutation}
As discussed in the previous sections, the metrics $V_\text{LMC}^{\Ggiven, \data}$ and $\phi_\text{LMC}^{\Ggiven, \data}$ are insufficient to measure the consistency of a graph. In the following, we therefore derive a baseline to compare the number of LMC violations $V_\text{LMC}^{\Ggiven, \data}$ to. 

\subsection{Finding a suitable baseline}
Motivated by a very skeptical view whether the user specified graph is related to the observed independence structure at all, we are interested in a baseline that is a \textit{random} draw of a set of conditional independence statements. For example, consider the dependence structure of microsvervices in a distributed system, where multiple effects can lead to violations of independence statements on observed data (\figref{fig:examplefig}).

\looseness=-1In general, there can be different reasons why the pattern of observed conditional independences appears unrelated to $\Ggiven$. On the one hand, the domain expert who provided $\Ggiven$ may have messed up causal links and directions entirely. But even if all the links of $\Ggiven$ are correct, additional confounding and violations of faithfulness\footnote{Note, that while faithfulness violations cannot lead to CI violations for $\Ggiven$, it can make a random set of CI statements equally data consistent.} can mess up the independence structure. In both cases, DAG and independences appear random {\it relative to each other} regardless of whether we think the DAG or the pattern of independences to be random. Our experiments show that `better than random' is a surprisingly high bar, and both domain experts and causal discovery algorithms fail to meet it in many settings.

We now introduce a set of properties that draws from the random baseline should satisfy:
\begin{itemize}[leftmargin=18pt]
    \item [P1:]They should infer the same number $m$ of conditional independences as the given graph $\Ggiven$.
    \item [P2:]They should partition $m$ into $m_r$ conditional independences with conditioning sets of size $r$ with the same numbers $m_r$ as $\Ggiven$ does.
    \item [P3:]Conditional independences inferred by draws from the baseline should be closed under the semi-graphoid axioms~\citep{pearl1986,geiger1990}.
\end{itemize}
P1 ensures that the number of observed LMC violations on the baseline are comparable to those observed on $\Ggiven$. \Eg a random baseline inferring fewer CIs than $\Ggiven$ will also result in much fewer LMC violations.\footnote{This is analogous to a classifier (predicting CIs), that classifies only few samples as positive, hence exhibiting few false positives (LMC violations).} 
If $\Ggiven$ is sparse because it has been drawn with the implicit intention of explaining only {\it highly} significant dependences,  we should not reject it just because it shows many LMC violations with respect to {\it our} significance level. Instead, we should rather benchmark $\Ggiven$ against a random guess that infers equally many independences.  
P2 ensures that type I and type II errors, when testing implied CIs on the baseline, are comparable to those on $\Ggiven$. \Eg a random baseline inferring mostly CIs with small conditioning sets may exhibit fewer LMC violations than $\Ggiven$ by virtue of a smaller type I error rate (\cf~\figref{fig:type1_ci}). Lastly, P3 ensures that the CI statements from the baseline do not imply additional CIs (which are not already in the set) via the semi-graphoid axioms. For example, due to decomposition, $X\CI Y \cup W | Z \Rightarrow X \CI Y | Z ~ \& ~ X \CI W | Z$. 

\looseness=-1 A natural choice for a baseline that satisfies all the requirements above can be constructed by sampling node-permutations of $\Ggiven$. More formally, let $S_n$ denote the set of permutations $\pi$ on the vertices $\{1, ..., n\}$ of some graph $\graph$. For any permutation $\pi \in S_n$ we denote with $\sigma_\pi(\graph)$ the graph for which the edge $i \to j$ exists iff $\pi(i) \to \pi(j)$ exists in $\graph$. Because of the one-to-one correspondence between $\pi$ and $\sigma$ we will drop the subscript and in the following refer to $\sigma\in S_\graph$ as one node-permutation of $\graph$.

We can then construct our baseline by sampling random node-permutations $\sigma\in S_\Ggiven$ of the given graph $\Ggiven$. Let $O(\Ggiven)$ define the orbit of $\Ggiven$ under $S_\Ggiven$, i.e., the set of DAGs obtained via permutations. Since all DAGs in $O(\Ggiven)$ imply the same CIs as $\Ggiven$ up to renaming of variables, they satisfy P1 and P2. Furthermore, they satisfy P3 since the set of CIs entailed by a graph are closed under the semi-graphoid axioms.\footnote{
This is because faithful distributions exist, which satisfy only CIs corresponding to d-separation.}\footnote{This choice of baseline comes with another feature, namely the implicit testing of violations of faithfulness (\cf \suppref{sec:supp_faithfulness}).}

Note that the mapping $S_\Ggiven\to O(\Ggiven)$ defined by $\sigma \mapsto \sigma(\graph)$ is in general not one-to-one because there will often be a non-trivial subgroup that leaves $\Ggiven$ invariant (the stabilizer subgroup $\text{Stab}(\Ggiven)$ of $S_\Ggiven$).\footnote{\Eg the permutation $(2, 1, 3)$ leaves the graph $X_1\to X_3 \leftarrow X_2$ invariant.}

\begin{proposition}\label{prop:uniform_sampling}
Uniform sampling of permutations from the set of all node permutations $S_\Ggiven$ results in uniform sampling from the DAGs in the orbit $O(\Ggiven)$.
\end{proposition}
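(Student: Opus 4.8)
The plan is to recognize this as an instance of the orbit--stabilizer theorem applied to the action of the finite group $S_\Ggiven$ on the finite set of DAGs over the vertex set $\{1,\dots,n\}$. First I would record that $\sigma \mapsto \sigma(\graph)$ is a genuine group action: the identity permutation leaves every graph unchanged, and composing two relabelings relabels by the composite, since an edge $i\to j$ is present in the twice-relabeled graph iff the correspondingly relabeled pair is present in $\graph$. With a well-defined action in hand, $O(\Ggiven)$ is by definition the orbit of $\Ggiven$, and $\text{Stab}(\Ggiven)=\{\sigma\in S_\Ggiven : \sigma(\Ggiven)=\Ggiven\}$ is a subgroup of $S_\Ggiven$.

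The key step is to show that every DAG $\graph'\in O(\Ggiven)$ is hit by exactly $|\text{Stab}(\Ggiven)|$ permutations. Fix $\graph'\in O(\Ggiven)$ and choose any $\sigma_0$ with $\sigma_0(\Ggiven)=\graph'$ (one exists because $\graph'$ lies in the orbit). Then, for arbitrary $\sigma\in S_\Ggiven$, we have $\sigma(\Ggiven)=\graph'$ iff $\sigma_0^{-1}\sigma$ fixes $\Ggiven$, i.e.\ iff $\sigma\in\sigma_0\,\text{Stab}(\Ggiven)$. Hence the fiber of the map $\sigma\mapsto\sigma(\Ggiven)$ over $\graph'$ is precisely the left coset $\sigma_0\,\text{Stab}(\Ggiven)$, whose cardinality is $|\text{Stab}(\Ggiven)|$ regardless of the choice of $\graph'$ and $\sigma_0$ (left translation by $\sigma_0$ is a bijection).

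Finally I would assemble the probabilities. The fibers over the distinct elements of $O(\Ggiven)$ partition $S_\Ggiven$, which gives the orbit--stabilizer identity $|S_\Ggiven| = |O(\Ggiven)|\cdot|\text{Stab}(\Ggiven)|$; and if $\sigma$ is drawn uniformly from $S_\Ggiven$ then for any $\graph'\in O(\Ggiven)$,
\[
\Pr[\sigma(\Ggiven)=\graph'] \;=\; \frac{|\sigma_0\,\text{Stab}(\Ggiven)|}{|S_\Ggiven|} \;=\; \frac{|\text{Stab}(\Ggiven)|}{|O(\Ggiven)|\cdot|\text{Stab}(\Ggiven)|} \;=\; \frac{1}{|O(\Ggiven)|},
\]
which does not depend on $\graph'$. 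That is exactly uniformity over $O(\Ggiven)$, as claimed.

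I do not anticipate a substantive obstacle: this is a textbook counting argument. The only points needing a little care are (i) fixing the handedness of the action so that ``fiber $=$ coset'' is stated consistently (a right action works identically, with right cosets $\text{Stab}(\Ggiven)\,\sigma_0$), and (ii) making explicit the fact already noted in the text, namely that $\sigma\mapsto\sigma(\Ggiven)$ need not be injective, since it is precisely the nontrivial stabilizer that makes the statement worth proving rather than immediate.
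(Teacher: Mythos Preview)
Your argument is correct and follows the same approach as the paper: invoke the orbit--stabilizer theorem and observe that the preimage of each DAG in $O(\Ggiven)$ is a coset of $\text{Stab}(\Ggiven)$, hence all fibers have equal size $|\text{Stab}(\Ggiven)|$. The paper states this in two sentences, while you spell out the coset computation and the probability calculation in full; the content is the same.
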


A proof is provided in \suppref{sec:proof_uniform_sampling}. Further, $O(\Ggiven)$ decomposes into Markov equivalence classes of equal size. This can easily be seen by the same argument when we consider the action of $S_\Ggiven$ on the set of Markov equivalence classes and introduce the corresponding (larger) stabilizer subgroup. Thus, uniform sampling of permutations from the set of all node permutations $S_\Ggiven$ also results in uniform sampling from the Markov equivalence classes in $O(\Ggiven)$.

\subsection{A permutation test to evaluate DAGs}
Using the above baseline we state the following null hypothesis:

\noindent{\bf Hypothesis $H_0$:} The DAG $\Ggiven$ is drawn uniformly at random from some distribution $Q$ on the set of DAGs that is invariant under permutations of nodes, that is $Q(\graph)=Q(\sigma (\graph))$ for all $\sigma \in S_\graph$.

To test this hypothesis we consider the number of violations $|\vlmc[\Ggiven, \data]|$ as test statistics and build the null via $|\vlmc[\sigma(\Ggiven), \data]|$:
\begin{align}
    p_\text{LMC}^{\Ggiven, \data} = \text{Pr}\left( |\vlmc[\sigma(\Ggiven), \data]| \leq |\vlmc[\Ggiven, \data]| \right) \label{eq:p_lmc} ~.
\end{align}
\begin{proposition}\label{prop:p_lmc_p-value}
$p_\text{LMC}$ is a valid p-value, \ie if $H_0$ is true, then
$\text{Pr}\left( p_\text{LMC}^{\Ggiven, \data}\leq \alpha\right) \leq \alpha~.$
\end{proposition}
A proof is provided in \suppref{sec:proofpvalue}. Computing the quantity in \eqref{eq:p_lmc} using all $n!$ permutations is infeasible for large $n$. Therefore, we approximate it via Monte Carlo sampling with $T$ random permutations: $\{\sigma_i \sim S_\Ggiven \}_{i=0}^T$. For an estimated p-value we can also report binomial proportion confidence intervals.

Another quantity that proves to be useful in practice is the fraction of DAGs in $O(\Ggiven)$ that are Markov equivalent to $\Ggiven$. To this end we first define 
\begin{align}
    \vtpa[\graph', \graph] = \{ (X_i, X_j, Z) \in \tpa{\graph'}: X_i \nCI_\graph~ X_j \mid Z \}  \label{eq:V_dsep}
\end{align}
as the set of all ordered triples that are parentally d-separated in $\graph'$ but not d-separated in $\graph$. 
\begin{proposition}\label{prop:markov_eq}
Suppose some given graph $\Ggiven$. If $\vtpa[\sigma(\Ggiven), \Ggiven]=\emptyset$, then $\sigma(\Ggiven)$ and $\Ggiven$ are Markov equivalent.
\end{proposition}
The proof is provided in \suppref{sec:proofmec}.
Using this graphical criterion we can define a second metric
\begin{align}
    p_\text{TPa}^{\Ggiven} :=& \text{Pr}\left( |\vtpa[\sigma(\Ggiven), \Ggiven]| \leq |\vtpa[\Ggiven, \Ggiven]| \right) \notag \\
    =& \text{Pr}\left( |\vtpa[\sigma(\Ggiven), \Ggiven]| = 0 \right) ~,
\end{align}
which can be used to measure how informative the CI structure of $\Ggiven$ is about the possible causal orderings. If $p_\text{TPa}^{\Ggiven} > \alpha$, for some prespecified threshold $\alpha$, then the number of Markov equivalent DAGs in $O(\Ggiven)$ is large and consequently $\Ggiven$ provides us with limited information about the true graph in the sense of testing of CIs. For an information-theoretic interpretation of our test see \suppref{sec:supp_information_view}.
\subsection{Interpretation of $p_\text{LMC}$ and $p_\text{TPa}$}\label{sec:interpretation_metrics}
\looseness=-1We note that p-values are often misinterpreted and misused in practice \cite{vidgen2016,wasserstein2016,amrhein2019} and therefore provide the following interpretation based on Popper's theory of falsification \cite{popper1934}. According to this, every scientific theory must admit potential falsifiers, \ie measurable observations that would falsify the theory. Increasing confidence in such theory can then only come from observations that it permits, \ie as it withstands attempts to falsify it.

The test $p_\text{TPa}^{\Ggiven}$ provides us with a measure of the falsifiability of a given graph $\Ggiven$. If the number of random DAGs that are Markov equivalent to $\Ggiven$ is large (and consequently $p_\text{TPa}^{\Ggiven}$ is large) this limits the falsifiability of $\Ggiven$ via CI testing. Contrary, if $p_\text{TPa}^{\Ggiven}$ is small, the CIs entailed by $\Ggiven$ are `characteristic' and it admits many potential falsifiers. Based on these considerations, we propose the following interpretation of our tests for practitioners:
\vspace*{-0.5\baselineskip}
\begin{itemize}[leftmargin=16pt]
    \item [(a)]If $p_\text{TPa}^{\Ggiven} \leq \alpha$, $\Ggiven$ is falsifiable by testing implied CIs.
    \item [(b)]If (a) and further $p_\text{LMC}^{\Ggiven, \data} > \alpha$, then $\Ggiven$ is falsified.
    \item [(c)]If (a) and further $p_\text{LMC}^{\Ggiven, \data} \leq \alpha$, then there is no CI-based evidence against $\Ggiven$ and we cannot falsify $\Ggiven$ using our test. Consequently, $\Ggiven$ is corroborated (but not verified).
\end{itemize}
\vspace*{-0.5\baselineskip}

\subsection{Relating $p_\text{LMC}$ to the identification of cause-effect pairs}
We now want to present an interpretation of $p_\text{LMC}$ which is more closely related to \textit{causal} questions. To this end, consider the following task.
\begin{task}[Identification of unconfounded cause-effect pairs]\label{task:causal-effects}
Identify all ordered pairs $(X_i,X_j)$ such that there is a directed path from $X_i$ to $X_j$ and no $X_k$ with $k\neq i,j$ that has a directed path to $X_i$ and a directed path to $X_j$ that does not go through $X_i$.
\end{task}
We can then ask whether $\Ggiven$ performed better at \taskref{task:causal-effects} than a random baseline. However, in analogy to P1 above, the baseline should satisfy the following property:
\begin{itemize}[leftmargin=18pt]
    \item [P4:] Draws from the random baseline should infer the same number of unconfounded cause-effect pairs as the given graph $\Ggiven$.
\end{itemize}
Likewise to P1 \& P2, all DAGs in $O(\Ggiven)$ satisfy P4 since they preserve $\Ggiven$ up to a renaming of variables. By defining the number of wrongly inferred causal effects as test statistics and building the null via the node-permutation baseline, we can compute a p-value $p_\text{CE}$.

We will now see that under the assumption of faithfulness and knowledge of a single non-effect node, our metric $p_\text{LMC}$ is loosely related to $p_\text{CE}$. 

\begin{theorem}\label{th:unconfounded_cause_effect}
Suppose an additional node $X_0$ for which we know (\eg through strong domain knowledge) that it is not an effect of any of the $1,\dots, n$ nodes, but satisfies $X_0\nCI X_l,~\forall l=1, \dots, n$. Further, let the joint distribution $P(X_0, X_1, \dots, X_n)$ be Markov and faithful relative to some DAG $\graph$ with these $n+1$ nodes. Then $(X_i,X_j)$ with $i,j=1,\dots,n$ is an unconfounded cause-effect pair if and only if the following two conditions hold:
\begin{align} 
X_i &\nCI_P X_j \label{eq:cause-effect_dependence}\\
X_0 &\CI_P X_j\,| X_i \label{eq:cause-effect_independence}~.
\end{align}
\end{theorem}
The proof is provided in \suppref{sec:supp_proof_causal_effects}. We conclude that in our idealized scenario, testing whether $\Ggiven$ is better than random at \taskref{task:causal-effects}, is similar to testing whether $\Ggiven$ performs better than random in identifying all pairs $(X_i,X_j)$ for which both conditions \eqref{eq:cause-effect_dependence} and \eqref{eq:cause-effect_independence} hold. Here, the baseline is generated by the permutation group $S_{n-1}$, that is, the stabilizer group of $X_0$ in $S_n$.

\section{Experiments}\label{sec:experiments}
In the following we evaluate our proposed metric on synthetic and real data for which the true DAG is known or a consensus graph is established in the literature. Additionally, we introduce a novel dataset from cloud monitoring where the reversed call graph provides an estimate of the true causal graph (\figref{fig:examplefig}; (a)), thus providing a useful test beyond synthetic and existing real-world datasets\footnote{Code for running the experiments and the cloud monitoring dataset are available at \url{https://github.com/eeulig/dag-falsification}.}.

\subsection{Experimental setup}\label{sec:experimental_setup}
We conduct experiments with two different sources of given graphs: Emulated domain experts with partial knowledge of either a subset of nodes or a subset of edges of the true DAG $\Gtrue = (\verts, \Etrue)$ and causal discovery algorithms, which are a popular choice to estimate a DAG in the absence of domain expertise. Note that all $\Ggiven$ in our experiments differ from $\Gtrue$ in a structured, \ie non-random, way.

\paragraph{Node Domain Expert (DE-$\verts$)} This model mimics the situation where a domain expert knows all causal edges for some subset $K\subseteq\verts$ of the nodes in the system and knows the overall sparsity of the DAG. For all the other nodes, however, the expert has no domain
knowledge and thus assigns edges randomly between pairs of nodes not both in $K$. We define different levels of DE-$\verts$ to emulate  the fraction of nodes for which there exists domain knowledge, \ie $|K| / |\verts|$, where $|K|/|\verts|=0$ corresponds to the situation where the domain expert has no knowledge and $|K|/|\verts|=1$ corresponds to $\Ggiven=\Gtrue$. More details are given in \suppref{sec:node_expert}.

\paragraph{Edge Domain Expert (DE-$\edges$)} This model mimics a domain expert with edge-specific knowledge about $\Gtrue$. To construct $\Ggiven$, we randomly remove and flip some of the true edges and add some new ones. By construction $\Ggiven$ of a DE-$\edges$ is related to the Structural Hamming Distance (SHD)~\citep{acid2003,tsamardinos2006} and thus the desired similarity of a given graph can be controlled by means of the SHD. We characterize different DE-$\edges$ by the $\text{SHD}(\Ggiven, \Gtrue)$ they entail (or $\text{SHD}(\Ggiven, \Gtrue) / |\Etrue|$ to compare systems with different sparsity), where $\text{SHD}(\Ggiven, \Gtrue) / |\Etrue| = 0$ corresponds to  $\Ggiven=\Gtrue$. More details are given in \suppref{sec:edge_expert}.

\paragraph{Causal Discovery Algorithms} The proposed test can also be applied to DAGs inferred by causal discovery algorithms. However, as many algorithms use (some of the) CIs either explicitly (constraint-based) or implicitly (score-based) for constructing the DAG, evidence in favor of an inferred DAG can only come from those CIs that were neither used by the algorithm, nor implied, via semi-graphoid axioms~\citep{pearl1986,geiger1990}, by those CIs used by the algorithm. Nonetheless, we evaluate our test on graphs inferred by LiNGAM~\citep{shimizu2006}, CAM~\citep{buhlmann2014}, and NOTEARS~\citep{zheng2018}. We chose those algorithms as they are not solely based on CIs (note, however, that e.g. in LinGAM independence of noise entails CI). More details are given in \suppref{sec:app_results_causal_discovery}.
\subsection{Synthetic data and graphs}\label{sec:experiments_synthetic}
For evaluating our method on synthetic data we sample random $\Gtrue$ under the Erd\H{o}s-R\'{e}nyi model~\citep{erdos1959} with $n \in \{10,20,30\}$ nodes and an expected degree $d\in \{1, 2, 3\}$, denoted as ER-$n$-$d$. To generate synthetic data from $\Gtrue$, conditionals are modeled as additive noise models $X_i = f_i(\Pa{i}^{\Gtrue}) + N_i$ with $N_i$ sampled from a normal distribution and $f_i$ either being random (nonlinear) MLPs, or a linear combination of the node's parents. The exogenous variables are sampled from a standard normal, uniform, or Gaussian mixture distribution. 

For all experiments on synthetic data we sample $T=\num{e3}$ node permutations and use datasets with $N=\num{e3}$ observations. To investigate the effect of $N$ and $T$ on $p_\text{LMC}$ we run ablation studies on nonlinear data with $N,T \in \{\num{e1}, \num{e2}, \num{e3}, \num{e4}\}$.
More information on implementation and parameter choices is provided in \suppref{sec:app_experiments_synthetic}.
\begin{figure*}[t]
    \centering
    \begin{subfigure}{0.49\textwidth}
        \centering
        \includegraphics[width=\textwidth]{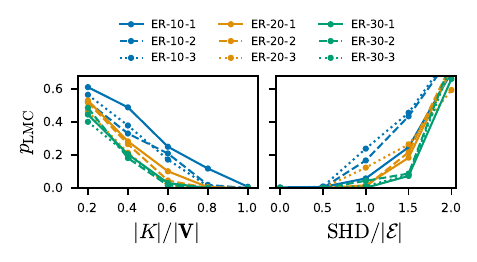}
        \vskip-0.5\baselineskip
        \caption{Synthetic (nonlinear mechanisms)}\label{fig:p_lmc_synthetic_nonlinear}
    \end{subfigure}
        \hfill
    \begin{subfigure}{0.49\textwidth}
        \centering
        \includegraphics[width=\textwidth]{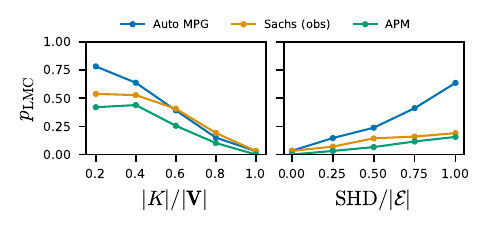}
        \vskip-0.5\baselineskip
        \caption{Real-world data}\label{fig:p_lmc_real}
    \end{subfigure}
    \caption{\label{fig:results_nonlinear_real}Mean $p_\text{LMC}$ for two types of domain experts, simulated via DE-$\verts$ (left; smaller numbers correspond to less domain knowledge) and DE-$\edges$ (right; smaller numbers correspond to more domain knowledge). On synthetic data (\subref{fig:p_lmc_synthetic_nonlinear}) for the true DAG ($|K|/|\verts| = 1$; $\text{SHD}/|\edges|=0$), we reject the null that the DAG is as bad as random with $\alpha=1\%$ for all configurations. $\Ggiven$ is falsified with the same $\alpha$ if $|K|/|\verts|\leq 0.6$ or $\text{SHD}/|\edges|\geq 1.5$. On real-world data (\subref{fig:p_lmc_real}), for the true DAG, we reject the null that it is as bad as random with $\alpha=5\%$ and $\Ggiven$ is falsified with the same $\alpha$ for $|K|/|\verts|\leq 0.8$ or $\text{SHD}/|\edges|\geq 0.5$ for all datasets.}
\end{figure*}
\subsection{Real data}\label{sec:experiments_real}
To evaluate our proposed metric on real-world data, we consider three datasets with established consensus graphs serving as ground truth. We provide further details on the data in \suppref{sec:app_experiments_real}.

\paragraph{Protein Signaling Network \citep{sachs2005}} This open dataset contains quantitative measurements of the expression levels of $n=11$ phosphorylated proteins and phospholipids in the human primary T cell signaling network. The $N=853$ observational measurements, corresponding to individual cells, were acquired via intracellular multicolor flow cytometry \cite{sachs2005}. The consensus DAG contains 19 edges ($d\approx3.45$).

\paragraph{Auto MPG~\citep{quinlan1993}} The Auto MPG dataset contains eight attributes (three multivalued discrete and five continuous) with the fuel consumption in miles per gallon (mpg) for $N=398$ unique car models. While the original use of the data was to predict mpg of a car, in line with previous works on causal inference~\citep[\eg][]{wang2017,teshima2021}, we use a consensus network ($n=6$, 9 edges, $d=3$).

\paragraph{Application Performance Monitoring (APM)} We collect trace data of microservices in a distributed system hosted on Amazon Web Services (AWS). The traces contain latency information on incoming and outbound requests of each service, averaged over \SI{20}{\min}, making observations approximately \iid. In total we ran the application for six days, leading to $N=432$ observations. We test the working hypothesis that the transpose of the dependency graph ($n=39$, 40 edges, $d\approx 2.05$) of the application is the true causal DAG (for a discussion of this hypothesis, see \secref{sec:node_permutation}).

\section{Results}\label{sec:results}
\subsection{Simulated graphs}
\noindent\textbf{Nonlinear Mechanisms} Figure \ref{fig:p_lmc_synthetic_nonlinear} depicts mean $p_\text{LMC}$ for 50 synthetic graphs of various size and sparsity with nonlinear mechanisms modeled using random MLPs. As expected, we find that the average $p_\text{LMC}$ monotonically decreases with increasing amount of domain knowledge for both models of domain experts. When the domain expert has complete knowledge ($\Ggiven = \Gtrue$, corresponding to $|K|/|\verts| = 1$ for DE-$\verts$ and $\text{SHD}/|\edges|=0$ for DE-$\edges$), we reject the hypothesis that the DAG is as bad as a random node permutation with significance level $\alpha=1\%$ for all configurations ($p_\text{LMC} < 0.005$).

\paragraph{Gaussian-Linear Mechanisms} In the supplemental, \figref{fig:p_lmc_synthetic_linear} we report $p_\text{LMC}$ for synthetic graphs of various size and sparsity and with linear-gaussian mechanisms. Similar to DAGs with nonlinear mechanisms, we observe that our metric strictly decreases with increasing amount of domain knowledge for both models of domain experts. If $\Ggiven = \Gtrue$, we would not falsify the true graph using our metric and significance level $\alpha=1\%$.

\begin{table}[tb]
    \centering
    \caption{\label{tab:real_results} Mean $p_\text{LMC}$ and standard deviation (over 50 trials) for the consensus graphs of the real-world data sets with 95\% confidence interval. For $\alpha=5\%$ we reject the hypotheses that the graphs are as bad as random ones, despite high fractions of violations  $\flmc[\Gtrue, \data]$ \eqref{eq:rel_lmc}.}
    \fontsize{9pt}{11pt}\selectfont
    \setlength{\tabcolsep}{4pt}
    \begin{tabular}{@{}lccc@{}} 
    \toprule 
& \textbf{Sachs} & \textbf{Auto MPG} & \textbf{APM} \\ \midrule
$p_\text{LMC}^{\Gtrue,\data}$ & $0.031 \pm 0.006$ & $0.03 \pm 0.0052$ & $0.00$\\
$95\%$  conf. int. & $[0.020, 0.042]$ & $[0.019, 0.04]$& --\\ \midrule
$\flmc[\Gtrue,\data]$ & $0.08$ & $0.50$ & $0.22$\\\bottomrule
 \end{tabular}

\end{table}
\paragraph{Effect of number of sampled permutations and number of observations}
Further, we investigate the effect of the number of permutations $T$ and sample size $N$ on $p_\text{LMC}$ for synthetic DAGs with nonlinear mechanisms (\tabref{tab:num_perms_samples}). To limit the running time of the experiment, we only evaluate $p_\text{LMC}$ for the true graph, \ie $\Ggiven=\Gtrue$. Here, we notice that on average our metric is consistently below 0.05, and therefore we would not reject the true graph with significance level $\alpha=5\%$. The only exception to this are graphs with few nodes (ER-10-$d$), evaluated on very few ($N=10$) samples.

\subsection{Real world applications}\label{sec:exp_real}
Figure \ref{fig:p_lmc_real} shows mean $p_\text{LMC}$ over 50 sampled given DAGs for the three real-world datasets. Similar to the experiments with synthetic data, we find that with increasing amount of domain knowledge (higher $|K|/|\verts|$, lower $\text{SHD}/|\edges|$) $p_\text{LMC}$ is strictly decreasing. When $\Ggiven=\Gtrue$ we reject the hypotheses that the given graphs are as bad as a random node permutation at $\alpha=5\%$ for all datasets.

Furthermore, we find that for all real-world datasets the fraction of LMC violations $\flmc[\Gtrue, \data]$ \eqref{eq:rel_lmc} is higher than the expected type I error rate of $5\%$ for the significance level $\alpha=5\%$ we used for all conditional independence tests throughout this work (\cf \tabref{tab:real_results}). Thus, using $\flmc[\Gtrue, \data]$ as a metric, we would falsely reject the true causal graph, na\"ively assuming our CI tests would have valid level.

\begin{table}[tb]
    \centering
    \caption{\label{tab:discovery_sachs} $p_\text{LMC}$ and SHD for graphs inferred by causal discovery on the \citet{sachs2005} data.}
    \fontsize{8pt}{10pt}\selectfont
    \setlength{\tabcolsep}{2.5pt}
    \begin{tabular}{@{}l
S[table-format=1.3(1)]
S[table-format=1.4(1)]
S[table-format=1.4(1)]@{}} 
\toprule                                & {\textbf{NOTEARS}}    & {\textbf{CAM}}   & {\textbf{LiNGAM}}\\\midrule
$p_\text{LMC}^{\Ggiven,\data}$          & 0.548 \pm 0.237       & 0.0724 \pm 0.0866  & 0.0362 \pm 0.129 \\
$\text{SHD}/|\edges|$                   & 2.78 \pm 0.241        & 1.88 \pm 0.198   & 1.06 \pm 0.0806  \\\bottomrule
\end{tabular}

\end{table}
\subsection{Causal discovery algorithms}\label{sec:exp_causal_discovery}
While the main scope of this work is to evaluate user-given graphs, we conduct additional experiments with $\Ggiven$ inferred via causal discovery. On the \citet{sachs2005} data we find that graphs inferred by NOTEARS and CAM are not significantly better than random, whereas graphs inferred by LiNGAM are not falsified using our metric at $\alpha=5\%$ (\tabref{tab:discovery_sachs}). Furthermore, a ranking based on our metric is in accordance with an SHD ranking (NOTEARS $>$ CAM $>$ LiNGAM). Both inequalities are significant with $p<0.001$ for SHD and $p_\text{LMC}$ when tested using a Wilcoxon signed-rank test. For further experimental results see \suppref{sec:app_results_causal_discovery}.

\begin{table}[tb]
    \centering
    \caption{\label{tab:runtimes_large_graphs} Runtime  of $p_\text{LMC}$ for large graphs with up to 200 nodes. All graphs were modeled as ER-$n$-1, $n\in\{10,50,100,200\}$. Data were generated using nonlinear conditionals and $N=1000$ samples. For each test we sample $100$ permutations, sufficient to reject the null at $\alpha=1\%$. As CI test we employed the GCM with boosted decision trees as regressor. See \suppref{sec:supp_runtime} for more details.}
    \fontsize{9pt}{9pt}\selectfont
    \setlength{\tabcolsep}{3.5pt}
    \begin{tabular}{lcccc}\toprule
\textbf{Nodes $n$} & 10 & 50 & 100 & 200\\ \midrule
\textbf{Runtime [s]} & 5 $\pm$ 3 & 97 $\pm$ 7 & 507 $\pm$ 18 & 3,386 $\pm$ 87 \\ \bottomrule
\end{tabular}

    \vspace{-\baselineskip}
\end{table}
\subsection{Runtime}\label{sec:exp_runtime}
\looseness=-1 Large graphs may entail thousands of CIs that need to be tested in order to compute our metric. Therefore, we evaluated the feasibility of applying $p_\text{LMC}$ to graphs with up to 200 nodes (\tabref{tab:runtimes_large_graphs}). We find that runtimes are reasonably fast ($<$ 1h) even for very large graphs. Note, that domain experts and causal discovery methods would likely take much longer to come up with a DAG $\Ggiven$ in the first place. \Eg, on DAGs with 50 and 100 nodes CAM and NOTEARS are about an order of magnitude slower, respectively~\citep{rolland2022,lachapelle2019}. More information on the algorithmic complexity of our metric is provided in \suppref{sec:app_algorithms}.

\section{Discussion}\label{sec:discussion}
In this work we addressed the lack of a suitable metric to evaluate an estimated DAG on observed data. To this end we discussed an existing absolute metric that, without a baseline comparison, is difficult to interpret. 

We defined a set of properties that a suitable baseline should satisfy and found that sampling random node permutations of the given DAG is a natural way to satisfy these requirements. Using this baseline, we derived a novel metric which comprises two tests that measure first, how characteristic a given graph is in the sense that it is falsifiable by testing CIs and second, whether the given graph is significantly better than a random one in terms of CIs. Using graphs originating from emulated domain experts and causal discovery algorithms we evaluated our method on two types of data. Synthetic data with known true DAG and real-world data with a consensus graph established in the literature. Furthermore, we contributed a novel data set from cloud monitoring where an estimate of the true causal graph is given by the inverted dependency graph. 

We acknowledge that our baseline may seem like a low bar to clear and other baselines could be considered in future work. However, we argue that besides being a natural choice that satisfies a number of desirable properties (P1 -- P3 in \secref{sec:node_permutation}) our experiments show that `better than random' is a surprisingly high threshold that simulated domain experts and causal discovery algorithms fail to meet in many settings. We also note that applying our algorithm to DAGs inferred by causal discovery algorithms should be done with caution as elaborated in \secref{sec:experimental_setup} and other approaches may be better suited for this purpose \cite{faller2024}. Furthermore, in certain applications, specific suggestions for local edge improvements that go beyond the simple report of the triplets that result in LMC violations may be desired, and we leave this interesting direction for future work. Another promising direction could be to extend our node-permutation baseline to the likelihood $p(\data | \Ggiven)$ (\cf \secref{sec:related_work}).

\bibliography{bibliography.bib}
\newpage

\makeatletter
\renewcommand\section{\@startsection {section}{1}{\z@}{-2.0ex plus
-0.5ex minus -.2ex}{3pt plus 2pt minus 1pt}{\Large\bf\raggedright}}
\makeatother

\appendix
\onecolumn
\thispagestyle{plain}

\renewcommand{\thetable}{A\arabic{table}}
\renewcommand{\thefigure}{A\arabic{figure}}
\setcounter{table}{0}
\setcounter{figure}{0}
\setcounter{section}{0}

\hsize\textwidth
{\centering
{\LARGE\bfseries Toward Falsifying Causal Graphs Using a Permutation-Based Test \\ Supplementary Materials \par}} \vskip0.2in

\section{Experiments}\label{sec:app_experiments}
\subsection{False positive rate of different CI tests}\label{sec:app_type_1_ci}
In \figref{fig:type1_ci} we show the probability of type I errors for different sizes $D$ of the conditioning set $\{Z_1, Z_2, ..., Z_D\}$ for one parametric (partial correlation) and two nonparametric CI tests, the \textit{Kernel-based Conditional Independence Test (KCI)} \citep{zhang2011} and a test based on the \textit{Generalized Covariance Measure (GCM)} \citep{shah2020}. In our experiment, $Z_i$ were \iid standard Gaussian and $X$ and $Y$ were generated from $Z_1$ alone via $\beta Z + N$ with $\beta \in \text{U}(-1, 1), N\sim \mathcal{N}(0, \sigma^2), \sigma=0.1$ independent across $X, Y$. We used GCM with boosted regression trees as the regression method (as in all other experiments in this work) and neither learned the hyperparameters for GCM nor for KCI for comparability. The reported FPR is over 1000 random replications.

More closely related to our problem setup we can also investigate the fraction of LMC violations (caused by type I errors) for some DAG. To this end, we generated data with $N=400$ from ER-$n$-$d$ graphs with nonlinear conditionals. LMC violations were tested (with $\alpha=5\%$) using KCI. In \figref{fig:lmc_violations_correction}; left, we find an increasing FPR with increasing graph size. This cannot be corrected with standard methods for multiple testing adjustments (\figref{fig:lmc_violations_correction}; center, right), where we still observe high FPR of up to 20\% for the true graph in some settings. Thus, without a baseline comparison we cannot judge whether some observed fraction of violations is high or low.

\begin{figure}[b]
    \centering
    \includegraphics[width=0.6\textwidth]{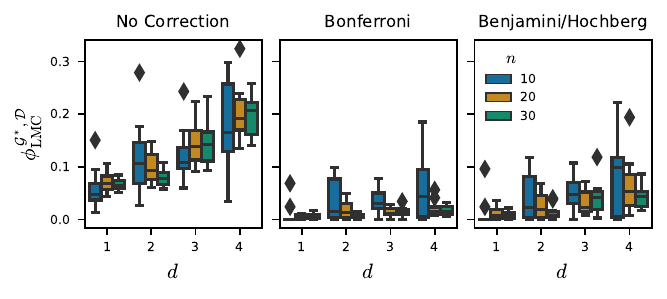}
    \vspace{-\baselineskip}
    \caption{Fraction of LMC violations for increasing number of nodes / connectivity. We generated data with $N=400$ from ER-$n$-$d$ graphs with nonlinear conditionals. LMC violations were tested (with $\alpha=5\%$) using KCI. We applied either no, Bonferroni, or Benjamin/Hochberg correction.}\vskip-\baselineskip
    \label{fig:lmc_violations_correction}
\end{figure}
\subsection{Node domain expert}\label{sec:node_expert}
Here, we outline the procedure by which the given DAG by a node domain expert \figref{fig:domain_expert_nodes} is constructed from the true DAG $\Gtrue$:
\begin{enumerate}[label=(\arabic*)]
    \item Extracting a random subset $K\subseteq \verts$ of the nodes (these are the nodes of which the domain expert has complete knowledge), for which
    \[
    \forall i, \forall j\in K: (i, j) \in \Etrue \Leftrightarrow (i, j) \in \Egiven~\text{, and}
    \]
    \item all other edges $(i, j) \in \Etrue, ~i\notin K \lor j \notin K$ are randomly shuffled while ensuring that $\Ggiven$ remains acyclic.
\end{enumerate}

\subsection{Edge domain expert}\label{sec:edge_expert}
Starting from the true DAG $\Gtrue$, to construct $\Ggiven$, some edges $N, N\cap\Etrue=\emptyset$ are added, $M\subseteq\Etrue$ are removed, and $L \subseteq \Etrue \setminus M$ are flipped. Similar to DE-$\verts$ we further ensure that $\Ggiven$ has the same sparsity as $\Gtrue$ by enforcing $|N| = |M|$. Using $\edges^f = \{(j, i): (i, j) \in \Etrue\}$ to denote the set of flipped edges, the given DAG $\Ggiven=(\verts, \Egiven)$ is constructed by:
\begin{enumerate}[label=(\arabic*)]
    \item Adding a random set of edges $N\subset \verts^2 \setminus (\Etrue \cup \edges^f)$
    \[
        \forall (i,j) \in N: (i,j) \not\in \Etrue \Leftrightarrow (i,j) \in \Egiven~\text{, and}
    \]
    \item removing a random subset $M\subset \Etrue$ of the edges:
    \[
        \forall (i,j) \in M: (i,j) \in \Etrue \Leftrightarrow (i,j) \not\in \Egiven~\text{, and}
    \]
    \item flipping a random subset $L \subset \Etrue \setminus M$ of the edges:
    \[
        \forall (i,j) \in L: (i,j) \in \Etrue \Leftrightarrow (j,i) \in \Egiven~.
    \]
\end{enumerate}
By construction $\Ggiven$ of a DE-$\edges$ is related to the Structural Hamming Distance (SHD)\footnote{We here consider the version of SHD in which anticausal edges are counted once.} \citep{acid2003,tsamardinos2006}, via $\text{SHD}(\Ggiven, \Gtrue) = |N| + |M| + |L|$, and thus the desired similarity of a given graph can be controlled by means of the SHD. We characterize different DE-$\edges$ by the $\text{SHD}(\Ggiven, \Gtrue)$ they entail (or $\text{SHD}(\Ggiven, \Gtrue) / |\Etrue|$ to compare systems with different sparsity), where $\text{SHD}(\Ggiven, \Gtrue) / |\Etrue| = 0$ corresponds to $\Ggiven=\Gtrue$.
\begin{figure}[tb]
    \centering
    \begin{subfigure}{0.31\textwidth}
\centering
\begin{tikzpicture}[node distance=0.8cm,on grid]
    \node[] at (0, 1) {$\Gtrue$};
    \node[state] (x1) at (0,0) {$X_1$};
    \node[state] (x2) at (-0.7, -1) {$X_2$};
    \node[state] (x3) at (0.7, -1) {$X_3$};
    \node[state] (x4) at (0, -2) {$X_4$};

    \path (x1) edge (x2);
    \path (x2) edge (x3);
    \path (x2) edge (x4);
    \path (x3) edge (x4);
\end{tikzpicture}
\caption{}\label{fig:true_dag}
\end{subfigure}
\begin{subfigure}{0.31\textwidth}
\centering
\begin{tikzpicture}[node distance=0.8cm,on grid]
\node[] at (0, 1) {$\Ggiven_{\text{DE-}\verts}$};
    \node[state] (x1) at (0,0) {$X_1$};
    \node[state] (x2) at (-0.7, -1) {$X_2$};
    \node[state, line width=0.9pt] (x3) at (0.7, -1) {$X_3$};
    \node[state, line width=0.9pt] (x4) at (0, -2) {$X_4$};

    \path (x3) edge[dashed] (x1);
    \path (x3) edge[dashed] (x2);
    \path (x2) edge[dashed] (x4);
    \path (x3) edge[line width=0.9pt] (x4);
\end{tikzpicture}
\caption{}\label{fig:domain_expert_nodes}
\end{subfigure}
\begin{subfigure}{0.31\textwidth}
\centering
\begin{tikzpicture}[node distance=0.8cm,on grid]
\node[] at (0, 1) {$\Ggiven_{\text{DE-}\edges}$};
    \node[state] (x1) at (0,0) {$X_1$};
    \node[state] (x2) at (-0.7, -1) {$X_2$};
    \node[state] (x3) at (0.7, -1) {$X_3$};
    \node[state] (x4) at (0, -2) {$X_4$};

    \path (x1) edge[dashed] (x3);
    \path (x3) edge[dotted] (x2);
    \path (x2) edge[line width=0.9pt] (x4);
    \path (x3) edge[line width=0.9pt] (x4);
\end{tikzpicture}
\caption{}\label{fig:domain_expert_edges}
\end{subfigure}
    \caption{\label{fig:domain_experts} (\subref{fig:true_dag}) True DAG $\Gtrue$; (\subref{fig:domain_expert_nodes}) $\Ggiven$ from a DE-$\verts$, where $K = \{X_3, X_4\}$ (\protect\inlinearc{2em}{line width=0.9pt}) and the remaining 3 edges are randomly shuffled (\protect\inlinearc{2em}{dashed}); (\subref{fig:domain_expert_edges}) $\Ggiven$ from a DE-$\edges$, where $N = \{(X_1, X_3)\}$ (\protect\inlinearc{2em}{dashed}), $M = \{(X_1, X_2)\}$, and $L=\{(X_2, X_3)\}$ (\protect\inlinearc{2em}{dotted}).}
\vskip -\baselineskip
\end{figure}
\subsection{Synthetic data}\label{sec:app_experiments_synthetic}
For all experiments on synthetic data we sample $T=\num{e3}$ node permutations and use datasets with $N=\num{e3}$ observations. For experiments on graphs inferred via causal discovery algorithms we infer the graph on $N=\num{e3}$ independent samples. To investigate the effect of $N$ and $T$ on $p_\text{LMC}$ we run ablation studies with $N,T \in \{\num{e1}, \num{e2}, \num{e3}, \num{e4}\}$.
\paragraph{Generation of Random DAGs} To sample random DAGs we first generate random graphs under the Erd\H{o}s-R\'{e}nyi model \citep{erdos1959} and convert a graph to a DAG by only keeping the lower triangle of its adjacency matrix.
\paragraph{Linear Mechanisms}For simulating linear relationships between nodes, we model conditionals as ANMs
\begin{align}
    X_i &= f_{i}(\Pa{i}^{\graph^t}) + N_i \label{eq:ANM}\\ 
    f_{i} &= \sum_{j=0}^M w_j \left(\Pa{i}^{\graph^t}\right)_j,~N_i\sim \mathcal{N}(0, \sigma^2), \sigma=0.1, ~
\end{align}
where $M=|\Pa{X_i}^{\graph^t}|$, and $w_j \sim \text{U}(-1.0, 1.0)$.
For all synthetic data sets with linear meachnisms we utilize a CI test of partial correlation to evaluate our metric.
\paragraph{Nonlinear Mechanisms} For simulating nonlinear relationships between nodes we use 3-layer MLPs with randomly initialized weights to generate functions $f_i$ in \eqref{eq:ANM}:
\begin{align}
    f_{i}(\pa{i}{\Gtrue}) = \sigma \left ( \sum_{l=0}^O w_{1l}^{(3)} \sigma\left( \sum_{k=0}^N w_{lk}^{(2)} \sigma\left( \sum_{j=0}^M w_{kj}^{(1)} \left(\pa{i}{\Gtrue}\right)_j\right)\right)\right)~,
\end{align}
where $\sigma$ denotes the sigmoid function, $N, O \overset{\text{i.i.d.}}{\sim} \text{U}(2, 100)$, and entries in the weight matrices $w_{ij}^{(k)} \sim \text{U}(-5.0, 5.0)$.
For all synthetic data sets with nonlinear meachnisms we utilize a CI test based on the generalised covariance measure \citep{shah2020} with boosted decision trees as the regression model.
\subsection{Real data}\label{sec:app_experiments_real}
The confidence intervals in \tabref{tab:real_results} are computed via
\begin{align}
    \text{confidence interval} = \hat{p}_\text{LMC} \pm z \sqrt{\frac{\hat{p}_\text{LMC}(1-\hat{p}_\text{LMC})}{T}} ~, \label{eq:ci_p_lmc}
\end{align}
where $\hat{p}_\text{LMC}$ denotes a p-value estimated using $T$ permutations and $z$ denotes the $z$-score. In all experiments on real-world data we set $T=1000$.

For the \textbf{Protein Signaling Network} dataset, we define the true DAG (\figref{fig:sachs_dag}) according to the ``extended expert model'' described in \cite{sachs2005,ramsey2018,glymour2019}. A pairwise scatter plot and histograms of the 11 variables is provided in \figref{fig:sachs_data}.
Because of the relatively large number of samples, we validate LMCs using a CI test based on the generalised covariance measure \citep{shah2020} with boosted decision trees as the regression model.

For the \textbf{Auto MPG} dataset, we define the true DAG (\figref{fig:auto_dag}) based on known causal relationships and similar to the ones described in \citet{wang2017,teshima2021}. In \figref{fig:auto_data} we provide a pairwise scatter plot and histograms of the 6 variables we used to model the system. Due to the discrete features we employ a kernel-based CI test \citep{zhang2011} to validate LMCs.

For the \textbf{Application Performance Monitoring} dataset, we employ the \texttt{PetAdoptions} application as introduced by AWS in an online workshop\footnote{\url{https://catalog.workshops.aws/observability/}} and available on GitHub\footnote{\url{https://github.com/aws-samples/one-observability-demo/tree/main/PetAdoptions}}. We ran the application for six days and collected latency data of each microservice averaged over \SI{20}{\min}, leading to a total of $N=432$ datapoints which we treat as \iid for our following experiments (see \figref{fig:apm_data} for a pairwise scatter plot and histogram of the variables). In our experiments, we assume the transpose of the dependency graph of the application to be the causal DAG of the system \figref{fig:apm_dag}. Note, that for some of the nodes no data was recorded, or the data had missing values (due to less frequent calls to the respective microservice). In our experiments we excluded those nodes from the graph evaluation.
Because of the large number of graph-implied independencies  we employ a CI test based on the generalized covariance measure using boosted decision trees as regression model.

\begin{figure}[tb]
    \centering
    \vspace*{-\baselineskip}
    \includegraphics[width=0.51\textwidth]{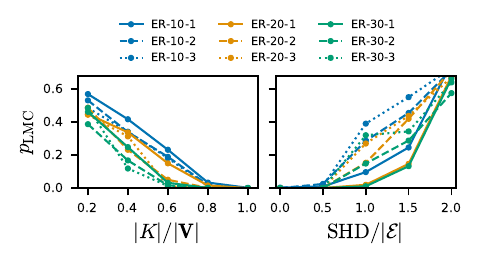}
    \caption{\label{fig:p_lmc_synthetic_linear} $p_\text{LMC}$ for random ER graphs with Gaussian-linear mechanisms.}
    \end{figure}
\begin{figure}[htb]
    \centering
    \begin{subfigure}{0.3\textwidth}
        \includegraphics[width=\textwidth]{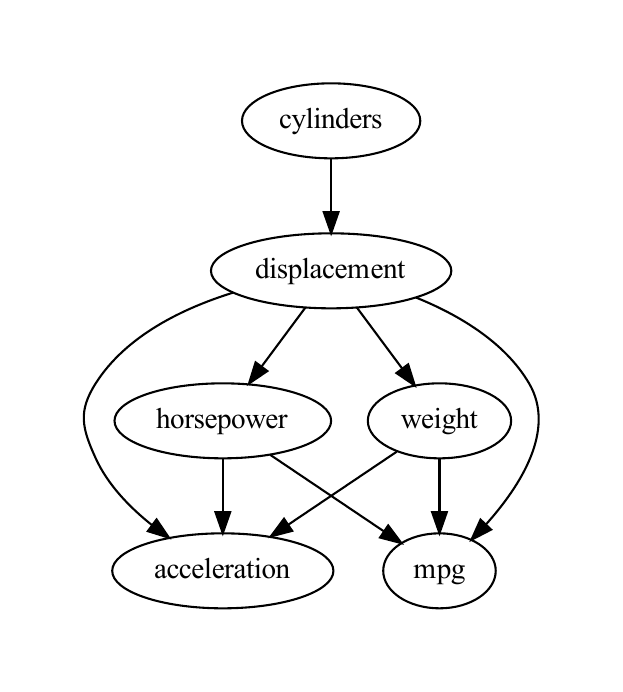}
        \caption{}\label{fig:auto_dag}
    \end{subfigure}
    \begin{subfigure}{0.3\textwidth}
        \includegraphics[width=\textwidth]{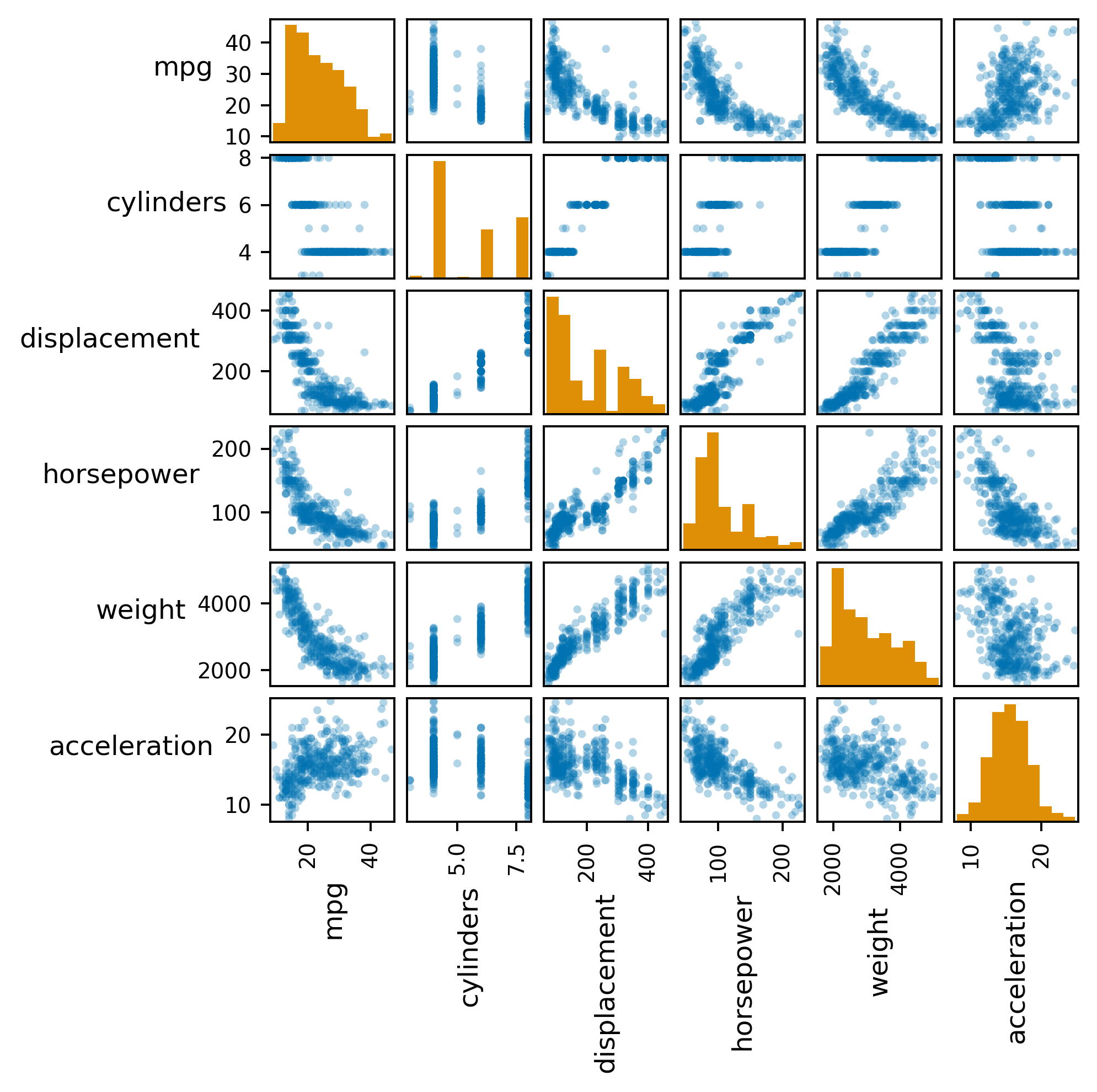}
        \caption{}\label{fig:auto_data}
    \end{subfigure}\newline
    \begin{subfigure}{0.3\textwidth}
        \includegraphics[width=\textwidth]{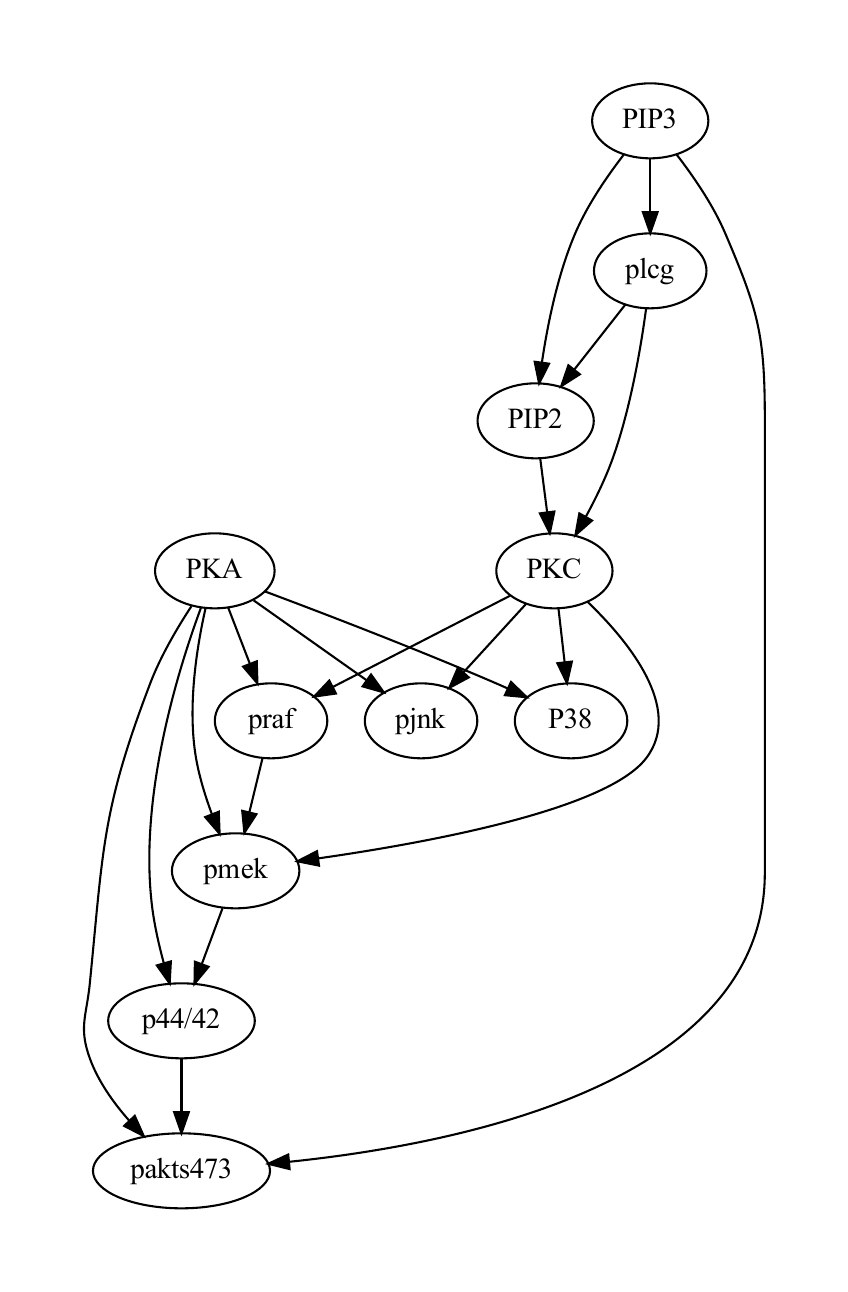}
        \caption{}\label{fig:sachs_dag}
    \end{subfigure}
    \begin{subfigure}{0.4\textwidth}
        \includegraphics[width=\textwidth]{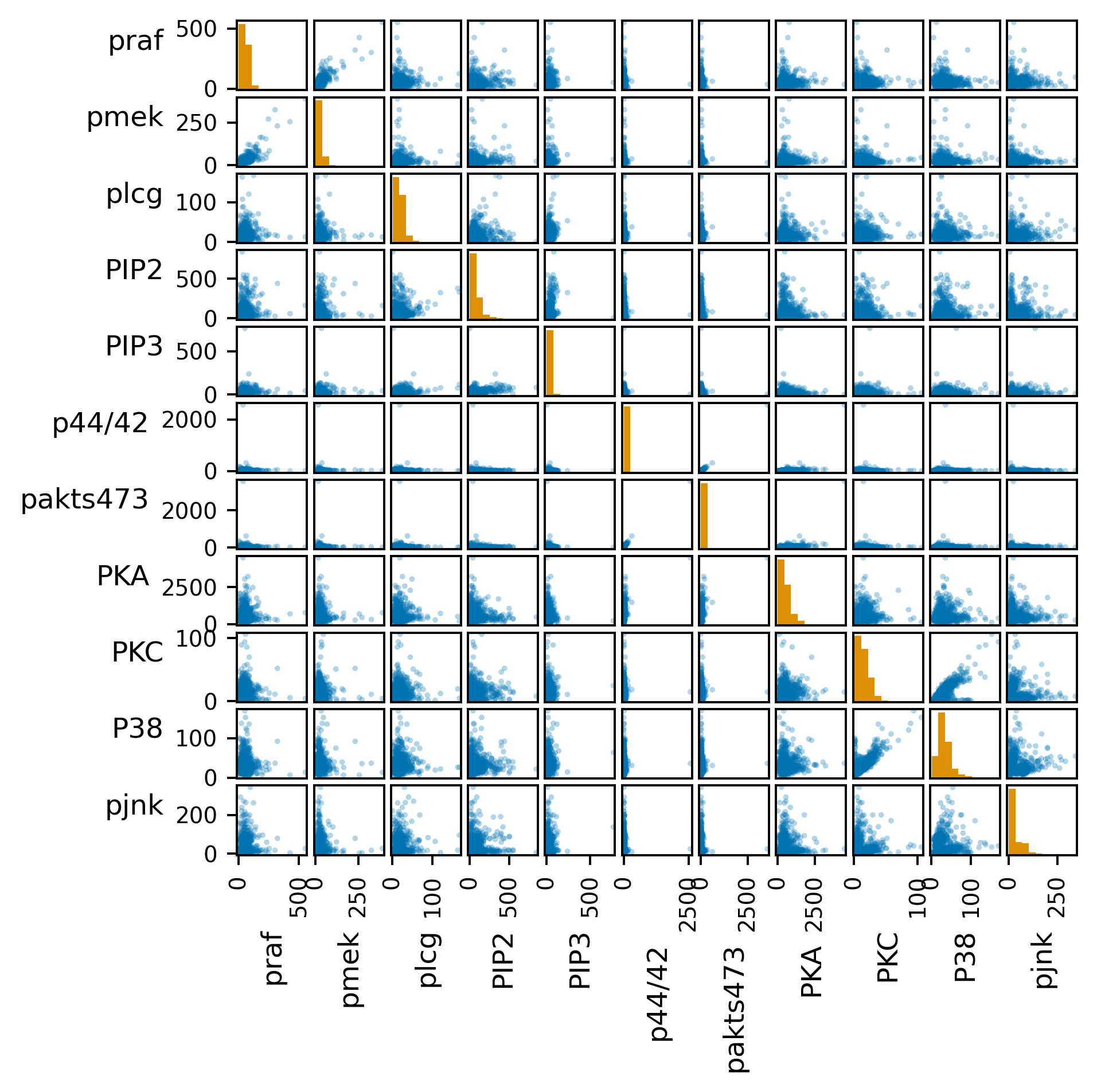}
        \caption{}\label{fig:sachs_data}
    \end{subfigure}\newline
    \begin{subfigure}{0.5\textwidth}
        \includegraphics[width=\textwidth]{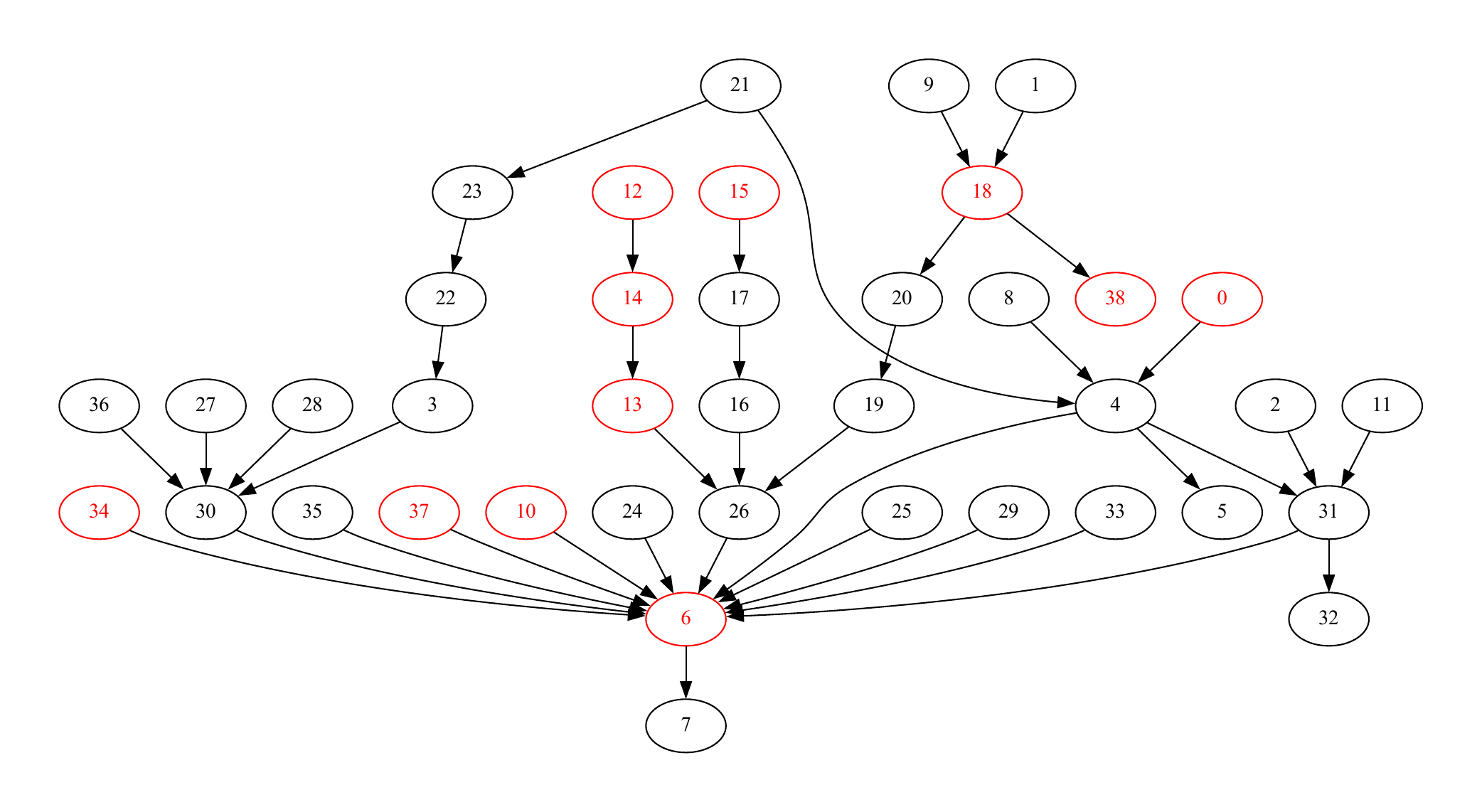}
        \caption{}\label{fig:apm_dag}
    \end{subfigure}
    \begin{subfigure}{0.3\textwidth}
        \includegraphics[width=\textwidth]{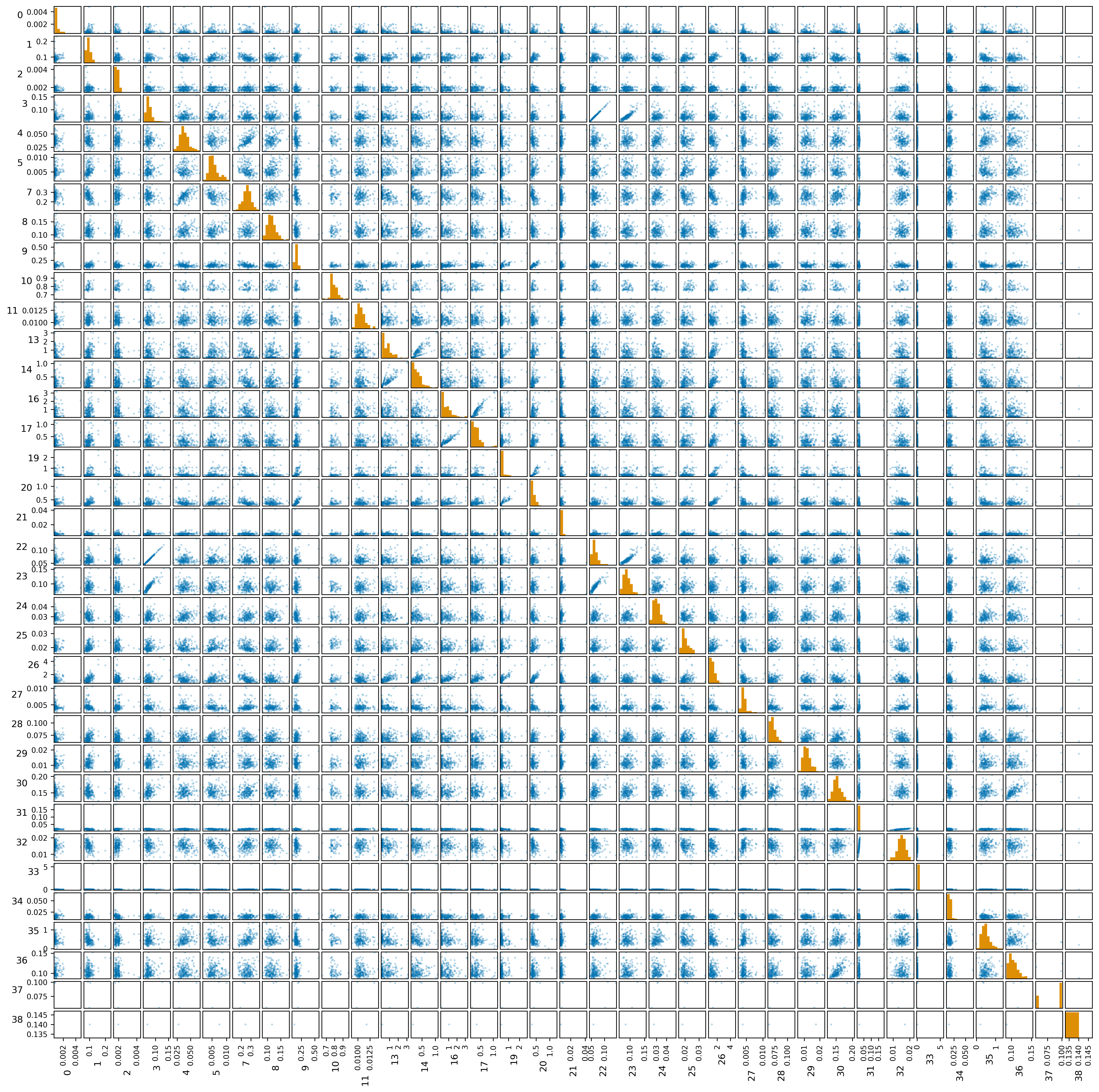}
        \caption{}\label{fig:apm_data}
    \end{subfigure}
    \caption{\label{fig:real_dag_data}(\subref{fig:auto_dag}) $\graph^t$ for the Auto MPG data; (\subref{fig:auto_data}) Pairwise scatter plot and histograms of the variables in the Auto MPG data from \citet{quinlan1993}; (\subref{fig:sachs_dag}) $\graph^t$ for the protein signaling networks data; (\subref{fig:sachs_data}) Pairwise scatter plot and histograms of the 11 variables in the protein signaling network data from \citet{sachs2005}; (\subref{fig:apm_dag}) $\graph^t$ for the APM data with excluded nodes marked in red; (\subref{fig:apm_data}) Pairwise scatter plot and histograms of the variables in the APM data from.}
\end{figure}
\subsection{Runtime}\label{sec:supp_runtime}
In \tabref{tab:runtimes_large_graphs} we report the runtime of $p_\text{LMC}$ for larger graphs with up to 200 nodes. All graphs were modeled as ER-$n$-1, $n\in \{ 10,50,100,200 \}$. Data were generated using nonlinear conditionals and $N=1000$ samples. For each test we sample $100$ permutations, sufficient to reject the null at $\alpha=0.01$. As CI test we employed the Generalized Covariance Measure (GCM) with boosted decision trees as regressor. We report mean $\pm$ std over five random ER graphs. The time measurements were taken on a machine with an AMD Ryzen Threadripper 3960X CPU and 128 GB of RAM.

\section{Algorithmic complexity}\label{sec:app_algorithms}
Recall that $T$ is the number of permutations we sample. For each permutation we iterate through the nodes and check conditional independence of the node and each non-descendent conditioned on the parents. 
For $n$ nodes in the graph we thus require $\mathcal{O}(T\cdot n^2)$ independence test. The complexity of each such test further depends on the degree of the nodes that represent the conditioning set. 
A practical improvement in case the number of violations is small is to start out with testing for each node $i$ whether
$X_i \CI_\data \nd{i}{\graph} \setminus\pa{i}{\graph} \mid \pa{i}{\graph}$. If yes we can move on to the next node. If not, then we have to do some more work to find out exactly how many nondescendents are dependent. This can be done by recursively splitting the set of nondescendents for which a violation was found and recursing on those with violations. That number of group conditional independence test is $\mathcal{O}(|\vlmc[\graph, P] | \cdot \log(n^2))$. It is worth pointing out though that the complexity of the test depends on the group size. So the time savings do not just depend on the number of violations but also the exact test that is used. Reducing the number of tests not only reduces the running time but also the overall type I error. Alternatively, we can consider sampling from $\tpa{\Ggiven}$ to estimate the number of violations rather than computing it exactly.

\section{Additional experimental results}\label{sec:app_results}
In the following we present further experimental results for both synthetic (\suppref{sec:app_results_synthetic}) and real data (\suppref{sec:app_results_synthetic}), as well as on graphs inferred via causal discovery algorithms (\suppref{sec:app_results_causal_discovery}).
\subsection{Synthetic data}\label{sec:app_results_synthetic}
\textbf{Gaussian-Linear Mechanisms} Figure \ref{fig:p_lmc_synthetic_linear} shows $p_\text{LMC}$ (\cf \secref{sec:node_permutation}) for synthetic graphs of various size and sparsity and with linear-gaussian mechanisms, where CIs were tested via partial correlation. As expected, we find that the average $p_\text{LMC}$ monotonically decreases with increasing amount of domain knowledge for both models of domain experts. When the domain expert has complete knowledge of the underlying system ($\Ggiven = \Gtrue$, corresponding to $|K|/|\verts| = 1$ for DE-$\verts$ and $\text{SHD}/|\edges|=0$ for DE-$\edges$), we reject the hypothesis that the DAG is as bad as a random node permutation with significance level $\alpha=1\%$ for all configurations.

\textbf{Effect of number of sampled permutations and number of observations} In \tabref{tab:num_perms_samples} we report results on the effect of the number of observations and the number of sampled permutations on our proposed metric for ER graphs with nonlinear mechanisms and of different size and sparsity. In \figrefs{fig:hists_synthetic_linear,fig:hists_synthetic_nonlinear}, we show the accumulated histograms of fractions of violations for both domain experts on random ER graphs with varying density and size for linear and nonlinear mechansims, respectively.

\begin{table}[tb]
\centering
 \caption{\label{tab:num_perms_samples}Effect of number of observations $N$ and number of sampled permutations $T$ on our metric. We evaluate $p_\text{LMC}$ of randomly sampled ER graphs with varying number of nodes $n$ and sparsity $d$ for which data is generated via nonlinear SCMs (\cf \secref{sec:experiments_synthetic}). The reported numbers are averages over 50 random samples.}
 \fontsize{8pt}{8pt}\selectfont
 \setlength{\tabcolsep}{4pt}
\begin{tabular}{@{}ccS[scientific-notation = true, round-precision = 1, table-format = 1.1e-1]S[scientific-notation = true, round-precision = 1, table-format = 1.1e-1]S[scientific-notation = true, round-precision = 1, table-format = 1.1e-1]S[scientific-notation = true, round-precision = 1, table-format = 1.1e-1]S[scientific-notation = true, round-precision = 1, table-format = 1.1e-1]S[scientific-notation = true, round-precision = 1, table-format = 1.1e-1]S[scientific-notation = true, round-precision = 1, table-format = 1.1e-1]S[scientific-notation = true, round-precision = 1, table-format = 1.1e-1]@{}} 
    \toprule 
    &     & \multicolumn{4}{c}{\textbf{Number of permutations ($T$)}} & \multicolumn{4}{c}{\textbf{Number of observations ($N$)}} \\\cmidrule(lr){3-6}\cmidrule(lr){7-10}
$n$ & $d$ & {10} & {100} & {1000} & {10000} & {10} & {100} & {1000} & {10000} \\ \midrule
    &1   & 0.006 & 0.0094 & 0.00936 & 0.008902 & 0.1045 & 0.0084 & 0.0028 & 0.002 \\
10  &2   & 0 & 0 & 0 & 2.2e-05 & 0.08496 & 0.00036 & 2e-05 & 0 \\
    &3   & 0 & 0 & 0 & 0 & 0.1003 & 0.0004 & 2e-05 & 0 \\\midrule
    &1   & 0 & 0 & 4e-05 & 4.4e-05 & 0.00724 & 2e-05 & 0 & 0 \\
20  &2   & 0 & 0 & 0 & 0 & 0.00258 & 0 & 0 & 0 \\
    &3   & 0 & 0 & 0 & 0 & 0.0098 & 0 & 0 & 0 \\\midrule
    &1   & 0 & 0 & 0 & 0 & 0.00332 & 0.00052 & 0 & 0 \\
30  &2   & 0 & 0 & 0 & 0 & 0.00078 & 0 & 0 & 0 \\
    &3   & 0 & 0 & 0 & 0 & 0.0034 & 0 & 0 & 0 \\\bottomrule
\end{tabular}

\end{table}
\begin{figure}[!t]
\captionsetup[subfigure]{justification=justified,singlelinecheck=false}
\begin{subfigure}{\linewidth}
	\centering{\includegraphics[width=0.6\linewidth]{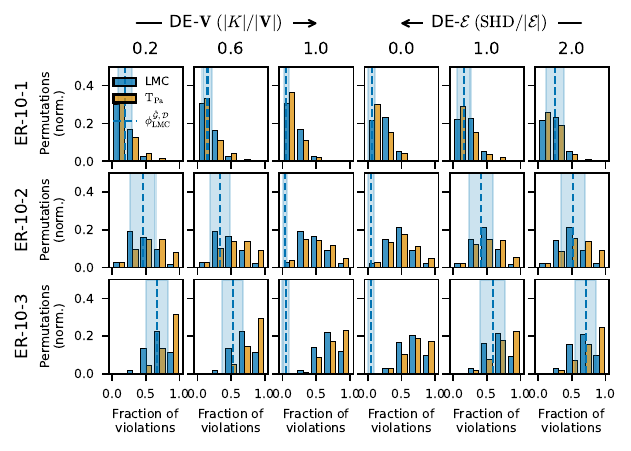}}
	\vskip -0.4\linewidth
	\caption{}
	\vskip 0.35\linewidth
	\label{fig:hists_synthetic_linear_ER-10}
\end{subfigure}
\begin{subfigure}{\linewidth}
	\centering{\includegraphics[width=0.6\linewidth]{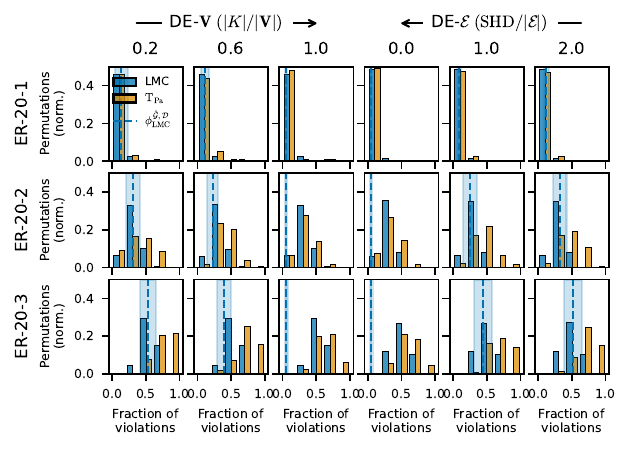}}
	\vskip -0.4\linewidth
	\caption{}
	\vskip 0.35\linewidth
	\label{fig:hists_synthetic_linear_ER-20}
\end{subfigure}
\begin{subfigure}{\linewidth}
	\centering{\includegraphics[width=0.6\linewidth]{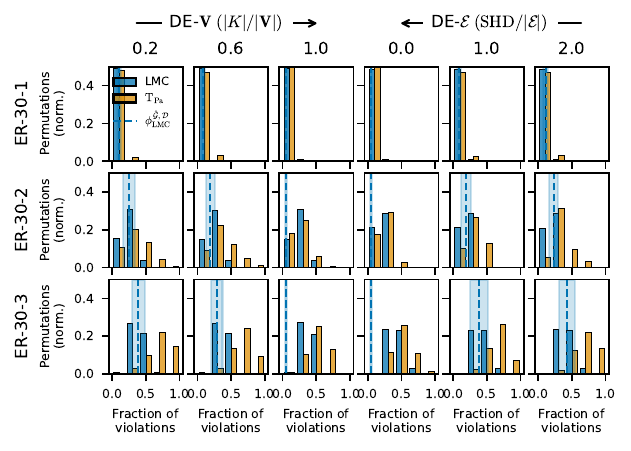}}
	\vskip -0.4\linewidth
	\caption{}
	\vskip 0.35\linewidth
	\label{fig:hists_synthetic_linear_ER-30}
\end{subfigure}
\caption{\label{fig:hists_synthetic_linear}Best viewed in color. Aggregated histograms of LMC (blue) and d-Separation (orange) violations for random ER DAGs with linear-gaussian mechanisms with $n=10$ (\subref{fig:hists_synthetic_linear_ER-10}), $n=20$ (\subref{fig:hists_synthetic_linear_ER-20}), and $n=30$ (\subref{fig:hists_synthetic_linear_ER-30}). The arrows indicate the direction of increasing domain knowledge for the two different domain experts. Additionally, we provide the mean $\flmc[\Ggiven, \data]$ over 50 sampled DAGs (dashed) and standard deviation (shaded).}
\vskip -\baselineskip
\end{figure}
\begin{figure}[!t]
\captionsetup[subfigure]{justification=justified,singlelinecheck=false}
\begin{subfigure}{\linewidth}
	\centering{\includegraphics[width=0.6\linewidth]{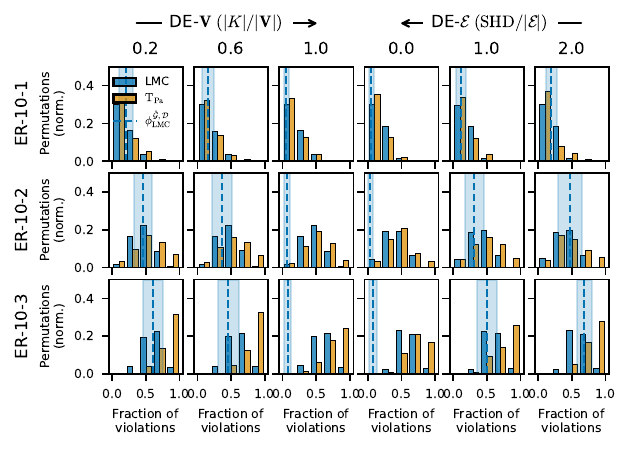}}
	\vskip -0.4\linewidth
	\caption{}
	\vskip 0.35\linewidth
	\label{fig:hists_synthetic_nonlinear_ER-10}
\end{subfigure}
\begin{subfigure}{\linewidth}
	\centering{\includegraphics[width=0.6\linewidth]{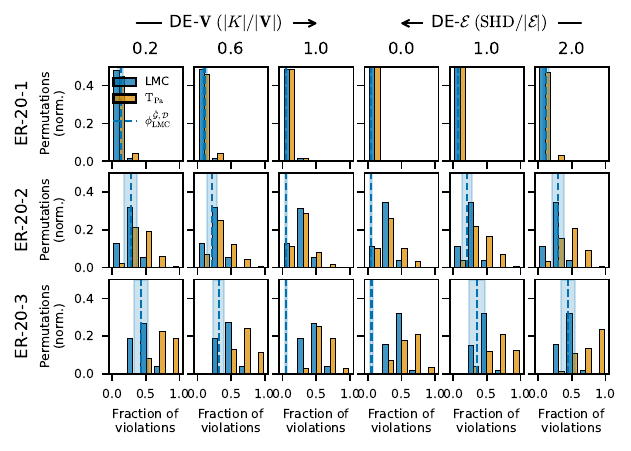}}
	\vskip -0.4\linewidth
	\caption{}
	\vskip 0.35\linewidth
	\label{fig:hists_synthetic_nonlinear_ER-20}
\end{subfigure}
\begin{subfigure}{\linewidth}
	\centering{\includegraphics[width=0.6\linewidth]{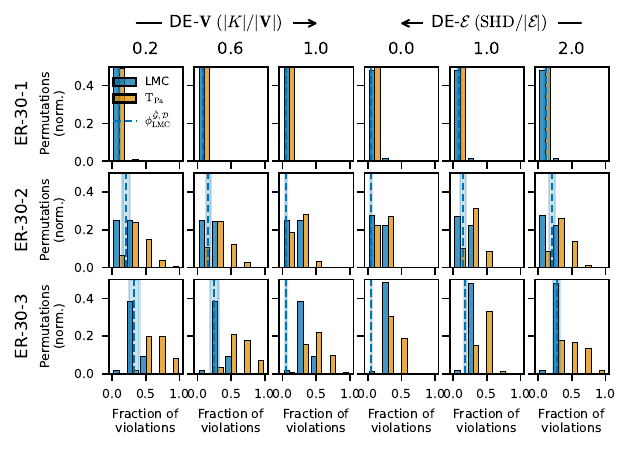}}
	\vskip -0.4\linewidth
	\caption{}
	\vskip 0.35\linewidth
	\label{fig:hists_synthetic_nonlinear_ER-30}
\end{subfigure}
\caption{\label{fig:hists_synthetic_nonlinear}Best viewed in color. Aggregated histograms of LMC (blue) and d-Separation (orange) violations for random ER DAGs with nonlinear mechanisms with $n=10$ (\subref{fig:hists_synthetic_nonlinear_ER-10}), $n=20$ (\subref{fig:hists_synthetic_nonlinear_ER-20}), and $n=30$ (\subref{fig:hists_synthetic_nonlinear_ER-30}). The arrows indicate the direction of increasing domain knowledge for the two different domain experts. Additionally, we provide the mean $\flmc[\Ggiven, \data]$ over 50 sampled DAGs (dashed) and standard deviation (shaded).}
\vskip -\baselineskip
\end{figure}
\subsection{Real data}\label{sec:app_results_real}
In \figref{fig:hists_real} we provide histograms of the number of fractions of LMC violations for the three real-world datasets.
\begin{figure}
    \centering
    \includegraphics[width=0.7\textwidth]{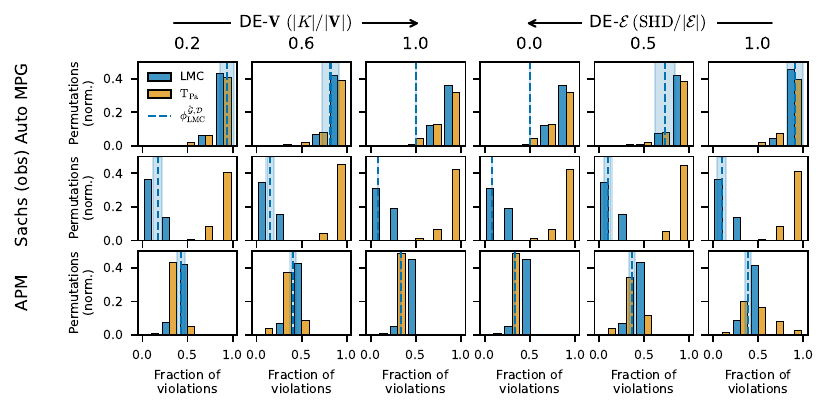}
    \caption{\label{fig:hists_real} Best viewed in color. Aggregated histograms of LMC (blue) and d-Separation (orange) violations for the Auto MPG \citep{quinlan1993}, Sachs \citep{sachs2005}, and APM data. The arrows indicate the direction of increasing domain knowledge for the two different domain experts. Additionally, we provide the mean $\flmc[\Ggiven, \data]$ over 50 sampled DAGs (dashed) and standard deviation (shaded).}
    \vskip -\baselineskip
\end{figure}

\subsection{Causal discovery algorithms}\label{sec:app_results_causal_discovery}
This work is mainly concerned with evaluating user-given graphs, \ie graphs originating from a domain expert that has knowledge about some of the variables/edges of the underlying causal system. However, if such domain knowledge is unavailable, practitioners may infer the graph from observational data using causal discovery. To test the applicability of our metric to such graphs (for arguments on why this applicability may be limited, \cf \secref{sec:experimental_setup}), we conduct additional experiments using graphs inferred via LiNGAM~\citep{shimizu2006}, CAM~\citep{buhlmann2014}, and NOTEARS~\citep{zheng2018} on synthetic $\Gtrue$ from ER-$n$-$d$, $n \in \{10,20,30\}, d\in \{1, 2, 3\}$ with nonlinear mechanisms ($N=2000$ samples)\footnote{Note that some causal discovery algorithms will perform better on our synthetic data and the motivation of these experiments is not to evaluate causal discovery algorithms.} and the protein signaling network dataset \citep{sachs2005}.\footnote{For LiNGAM and CAM, we use implementations from the Causal Discovery Toolbox (\url{https://github.com/FenTechSolutions/CausalDiscoveryToolbox}). For NOTEARS we use the authors implementation available at \url{https://github.com/xunzheng/notears}.} We randomly choose 50\% of the samples to infer the graph and the remaining samples to evaluate using our metric (\figref{fig:results_causal_discovery}). We repeated experiments (inferring using causal discovery + evaluation with our metric) ten times for each configuration of the synthetic data and 50 times for the protein signaling network dataset.

\begin{table}[tb]
    \centering
    \setlength{\tabcolsep}{2pt}
    \caption{\label{tab:discovery_synthetic} $p_\text{LMC}$ and SHD for graphs inferred by causal discovery on synthetic data with nonlinear mechanisms.}
    \fontsize{8pt}{12pt}\selectfont
    \begin{tabular}{@{}l
S[table-format=1.3(1)]
S[table-format=1.4(1)]
S[table-format=1.3(1)]@{}} 
\toprule                                & {\textbf{NOTEARS}}       & {\textbf{LiNGAM}}       & {\textbf{CAM}}      \\ \midrule
$p_\text{LMC}^{\Ggiven,\data}$ & 0.014 \pm 0.044        & 0.0048 \pm 0.024      & 0.031 \pm 0.11 \\
$\text{SHD}/|\edges|$    & 1.7 \pm 0.71        & 1.5 \pm 0.54        & 3.8 \pm 4.3    \\\bottomrule

\end{tabular}

    \vskip -\baselineskip
\end{table}
On synthetic data (\figref{fig:cd_p_lmc_synthetic_nonlinear}) we find that our metric is positively correlated with $\text{SHD}/|\edges|$ for CAM ($r=0.62, p<0.001$), but not for NOTEARS and LiNGAM. On both, synthetic and real data we find that a ranking of the causal discovery algorithms based on our metric agrees with a ranking based on $\text{SHD}/|\edges|$ (\tabrefs{tab:discovery_sachs,tab:discovery_synthetic}).
\begin{figure}[tb]
    \centering
    \begin{subfigure}{0.35\textwidth}
        \centering
        \includegraphics[width=\textwidth]{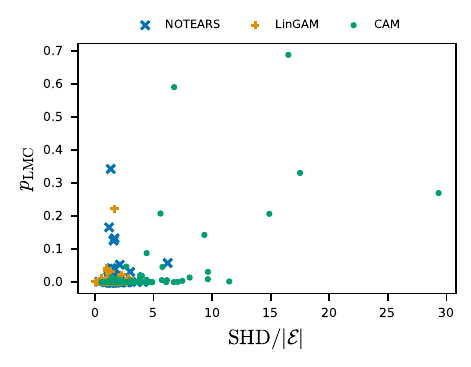}
        \vskip-0.5\baselineskip
        \caption{Synthetic (nonlinear mechanisms)}\label{fig:cd_p_lmc_synthetic_nonlinear}
    \end{subfigure}
    \hspace{2cm}
    \begin{subfigure}{0.35\textwidth}
        \centering
        \includegraphics[width=\textwidth]{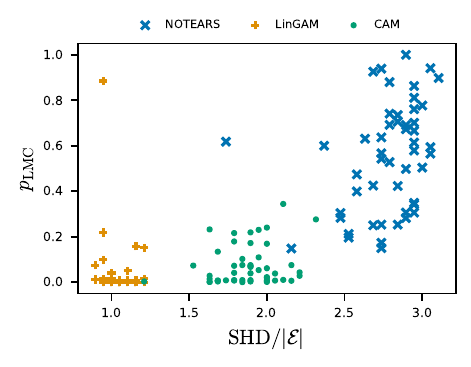}
        \vskip-0.5\baselineskip
        \caption{Protein signaling network data \cite{sachs2005}}\label{fig:cd_p_lmc_sachs}
    \end{subfigure}    
    \caption{\label{fig:results_causal_discovery}$p_\text{LMC}$ for $\Ggiven$ inferred via different causal discovery algorithms on (\subref{fig:cd_p_lmc_synthetic_nonlinear}) synthetic data and (\subref{fig:cd_p_lmc_sachs}) the protein signaling network data.}
    \vskip -\baselineskip
\end{figure}

\subsection{Comparison with Structural Intervention Distance}\label{sec:app_results_sid}
A common full-reference metric to evaluate the quality of some DAG is the Structural Intervention Distance (SID) \cite{peters2015}.

While our metric is not meant as an alternative to full-reference metrics such as SHD or SID, it is interesting to investigate whether our metric is correlated with the SID (similar to our experiments with SHD). In \figref{fig:p_lmc_vs_sid} we plot $p_\text{LMC}$ vs. $\text{SID}/|\edges|$ for graphs originating from a DE-$\verts$ for ER-10-2. Here we find that indeed $p_\text{LMC}$ is positively correlated with the relative SID ($r=0.67$) and graphs with higher domain knowledge have lower $p_\text{LMC}$ and relative SID on average, as expected.

\section{Proofs}\label{sec:app_proofs}
\subsection{Uniform sampling of permutations}\label{sec:proof_uniform_sampling}
In the following we proof \propref{prop:uniform_sampling}.
\begin{proof}
    From the orbit-stabilizer theorem~\citep{Rose2009} $|O(\Ggiven)|=|S_\Ggiven|/|\text{Stab}(\Ggiven)|$ and each DAG in $O(\Ggiven)$ has $|\text{Stab}(\Ggiven)|$ many different permutations as pre-images, namely the cosets of $\text{Stab}(\Ggiven)$ in $S_n$. 
    \end{proof}
\subsection{Type I error control of the node permutation test} \label{sec:proofpvalue}
In this section we proof \propref{prop:p_lmc_p-value} and show that 
$p_\text{LMC}$ is a p-value. 
\begin{proof}
Under the null hypothesis $\Ggiven$ is a randomly sampled graph from all node permutations. We need to show that then for any type I error control $\alpha$ we have that $\text{Pr}\left( p_\text{LMC}^{\Ggiven, \data}\leq \alpha\right) \leq \alpha.$ 
For ease of representation consider an increasing sorting of all permutations by their number of violations yielding $\sigma_1, \dots, \sigma_{|S|}$. For $\sigma_i$ at least $i/|S|$ permutations have at most as many violations as $\sigma_i$, hence $\text{Pr}_{\sigma}\left( |\vlmc[\sigma(\sigma_i(\Ggiven)), \data]| \leq |\vlmc[\sigma_i(\Ggiven), \data]|\right) \leq i/|S|$. 
We can view $\Ggiven$ as being sampled uniformly at random from $\sigma_i$. Thus, 
\begin{align}
    \text{Pr}\left(p_\text{LMC}^{\Ggiven, \data}\leq \alpha\right) &= \left|\left\{ \text{Pr}_{\sigma}\left( |\vlmc[\sigma(\sigma_i(\Ggiven)), \data]| \leq |\vlmc[\sigma_i(\Ggiven), \data]|\right) \leq \alpha \text{ for }1\leq i \leq |S|\right\}\right| / |S| \\ &\leq |\{ i/ |S| \leq \alpha \text{ for }1\leq i \leq |S|\}| / |S| \\ &\leq \alpha ~.
\end{align}
\end{proof}
In \figref{fig:emp_fpr_alpha} we also provide an empirical demonstration that \propref{prop:p_lmc_p-value} holds true in reality.
\begin{figure}[tb]
    \begin{minipage}{0.48\textwidth}
        \centering
        \includegraphics[width=0.7\textwidth]{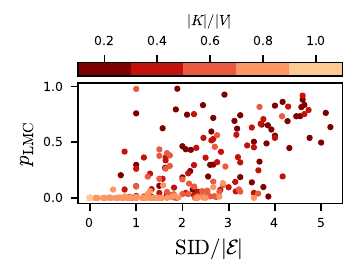}
        \caption{\label{fig:p_lmc_vs_sid} $p_\text{LMC}$ vs. SID for DAGs from a DE-$\verts$. Larger $|K|/|\verts|$ (brighter colors) correspond to more domain knowledge.}
    \end{minipage}\hfill
    \begin{minipage}{0.48\textwidth}
        \centering
        \includegraphics[width=0.6\textwidth]{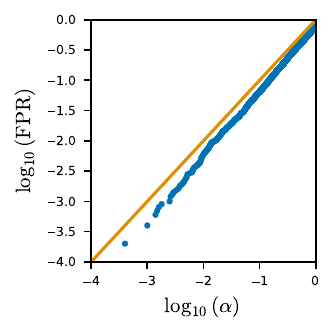}
        \caption{Empirical demonstration that \propref{prop:p_lmc_p-value} holds true in reality. We generated 10,0000 random ER-10-1 DAGs and randomly sampled one of the node permutations as $\hat{\mathcal{G}}$. We then plot FPR against $\alpha$ and find that $\text{FPR}\leq\alpha, \forall \alpha$.}
        \label{fig:emp_fpr_alpha}
    \end{minipage}
\end{figure}
\subsection{Markov equivalence class of permuted DAGs}\label{sec:proofmec}
In this section we proof~\propref{prop:markov_eq}.
\begin{proof}
    If $\vtpa[\sigma(\Ggiven), \Ggiven]=\emptyset$, then all parental triples in $\sigma(\Ggiven)$ are d-separated in $\Ggiven$. Further, \textit{all} d-separations in $\sigma(\Ggiven)$ also hold in $\Ggiven$ because every valid d-separation statement can be derived from the local ($X_i \CI_\graph \nd{i}{\graph} | \pa{i}{\graph}$) independencies (for a proof using semi-graphoid axioms see \eg \citet{lauritzen1990,lauritzen2020}), and the CIs implied by the parental triples imply the local independencies, \ie for some node $i$
    \begin{align}
        X_i \CI_\graph X_j | \pa{i}{\graph},~\forall X_j\in\nd{i}{\graph} \Rightarrow X_i \CI_\graph \nd{i}{\graph} | \pa{i}{\graph}
    \end{align}
    Since permutation preserves the number of valid d-separation statements, the statements in $\sigma(\Ggiven)$ cannot be a proper subset of the statements valid in $\Ggiven$ and thus all d-separation statements in $\Ggiven$ must also hold in $\sigma(\Ggiven)$. Thus, $\sigma(\Ggiven)$ and $\Ggiven$ are Markov equivalent.
\end{proof}

\subsection{Unconfounded cause-effect pairs}\label{sec:supp_proof_causal_effects}
In this section we proof~\thref{th:unconfounded_cause_effect}.
\begin{proof}
The proof is in strong analogy to the proof of the Theorem in \cite{atalanti2019}.
If $(X_i,X_j)$ is an unconfounded cause-effect pair, $X_i$ blocks all paths from $X_0$ to $X_j$. To show the converse, note that the dependence between $X_0$ and $X_i$ can only be due to a common cause or a directed path from $X_0$ to $X_i$ because  $X_0$ is not an effect of  $X_i$. If the dependence between $X_i$ and $X_j$ is due to a common cause or due to a directed path from $X_j$ to $X_i$, conditioning on $X_i$ unblocks the path from $X_0$ to $X_j$ and thus  $X_0 \CI X_j\,| X_i$ cannot hold.
\end{proof} 
\section{Implicit counting of faithfulness violations}\label{sec:supp_faithfulness}
This choice of the baseline also comes with another feature: so far, we have avoided counting violations of faithfulness because we hesitate to reject a hypothesis for that reason (see \secref{sec:validation_faithfulness} for a discussion of the faithfulness assumption). With our baseline, however, we implicitly also compare the number of faithfulness violations between a given DAG and its node-permutations. This is because every random DAG that shows more LMC violations than the given DAG must also show more violations of faithfulness (since it predicts the same number of independences as the given DAG). Hence, when using our baseline, we can remain undecided regarding our belief in faithfulness.
\section{Information-theoretic view on the node permutation test}\label{sec:supp_information_view}
Let us consider the following scenario. 
Let the variables of $\Ggiven$ be labeled according to some topological ordering of $\Ggiven$. This ordering may not be valid as topological ordering of 
$\graph^*$. Let $\pi$ some random choice among those re-orderings of nodes that result in a topological ordering of $\graph^*$.

We now state an assumption that we will later interpret with a grain of salt:

\begin{Assumption}\label{as:notworse}
$\sigma_\pi(\Ggiven)$
does not show more violations of LMC than $(\Ggiven)$.
\end{Assumption}

The idea is that $\Ggiven$ is not that far from $\graph^*$ that getting nodes into a valid causal ordering results in even more violations than the wrong causal ordering. Since we only estimate the number of violations for large $n$ anyway, we can read Assumption \ref{as:notworse} also in the sense of `does not show {\it significantly} more violations'.  

Together with Assumption \ref{as:notworse}, $\Ggiven$ and the independence structure reduces the set of possible orderings $\pi$ by the factor 
$p_\text{LMC}^{\Ggiven, \data}$. 
Starting with a uniform prior over $S_n$, 
$\Ggiven$ and the observed independences thus contain the Shannon information $- \log p_\text{LMC}^{\Ggiven, \data} $ about $\pi$. 
Likewise, $-\log p_\text{TPa}^{\Ggiven}$
is the amount of information $\Ggiven$
and the independence structure provide about the valid causal ordering.

\section{Sampling functional causal models to build a baseline}\label{sec:supp_fcm_baseline}
In this section we present an intuitive approach towards a baseline which is based on functional causal models. However, as we will see, this baseline is unsuitable without strong assumptions.

\subsection{Preliminaries}\label{sec:preliminaries}
The generative process of a system can be modelled using functional causal models (FCMs). We will denote with $\mathfrak{F}(\bm{\theta}, \graph)$ an FCM that defines conditional distributions of $\graph$ via a set of $n$ structural equations $X_i = f_i(\Pa{i}, N_i; \theta_i)$ with $f_i(\cdot;\theta_i), \theta_i \in \bm{\theta}$ denoting some parameterized function. We further denote with $P^{\mathfrak{F}(\bm{\theta},\graph)}$ the joint distribution induced by such an FCM.

\subsection{Validating faithfulness}\label{sec:validation_faithfulness}
According to Reichenbach's common cause principle, if two variables are (unconditionally) dependent, there must exist an unblocked path between them. 
The opposite implication, however, is only true if $\joint$ is faithful to $\graph$, \ie $A \CI_\joint B | C \Rightarrow A \CI_\graph B | C$ for all disjoint vertex sets $A, B, C$ \citep{spirtes2001}.

\begin{definition}[Marginal dependencies]\label{def:marginal_dependencies}
If $\joint$ is faithful to $\graph$ and there exists an unblocked path between variables $X_i, X_j$ then both must be (unconditionally) dependent, \ie $X_i \nCI_\joint X_j$.
\end{definition}

In \citet{reynolds2022} the authors validate a graph by measuring violations of marginal dependencies, \ie (pairwise) dependencies entailed by the graph which are not represented in the observed data. 
\begin{definition}[Violations of marginal dependencies]\label{def:violations_pd}
We denote with $\vmd[\Ggiven,\joint]$ the set of ordered pairs $(i\in\Ggiven, X_j\in\an{i}{\Ggiven})$ for which we observe marginal dependency (MD) violations on data $\data$, \ie
\begin{align}
    \vmd[\Ggiven,\data] = \left\{ i\in\Ggiven, X_j \in \an{i}{\Ggiven}: X_i \CI_\data X_j \right\} ~. \label{eq:V_md}
\end{align}
\end{definition}

$\vmd[\Ggiven, \data]$ detected via \defrefs{def:violations_pd} can have two causes: First, a wrong $\Ggiven$, where $X_i \nCI_\Ggiven X_j$ but $X_i \CI_\Gtrue X_j$. Second, a violation of faithfulness, where $X_i \CI_P X_j \not\Rightarrow X_i \CI_\Gtrue X_j$, or close-to-violation of faithfulness $X_i \CI_\data X_j \not\Rightarrow X_i \CI_\Gtrue X_j$. While detection of the first cause would be a viable measure of DAG consistency, it can't be distinguished from the second cause due to $\joint$ possibly being unfaithful (or close-to-unfaithful) to $\Gtrue$. The number of such faithfulness violations is a property of both the graph structure and the parameters of the system and thus highly domain dependent. Already in the population limit, the common `probability zero argument' relies on assuming a probability {\it density} in the parameter space of conditional distributions \citep{meek1995}. This argument breaks down for priors that assign nonzero probability to more structured mechanisms such as deterministic ones \citep{lemeire2012}. For finite data, this domain dependence gets even stronger because the probability of accepting independences not entailed by the Markov condition depends on the particular prior on the parameter space, and even for very simple priors over linear relations close-to-violations of faithfulness are not that unlikely \citep{uhler2013}.
The fact that those close-to-violations of faithfulness are both probable and undetectable when $\Gtrue$ is unknown renders $\vmd[\Ggiven,\data]$ (and with the same argument also possible measures of \textit{conditional} dependencies) unsuitable for measuring DAG consistency directly. 
\subsection{A baseline using functional causal models}
Previously discussed absolute metrics lack a baseline of how many violations to expect for the given graph. \Eg when we observe some number of faithfulness violations, it remains unclear, which of those are due to a wrong $\Ggiven$ and which are due to $P$ being (near) unfaithful to $\Gtrue$. One could consider constructing a seemingly straightforward baseline as follows: First, sample FCMs $\mathfrak{F}(\thgiven, \Ggiven)$ with random parameters $\thgiven$ and graph structure $\Ggiven$. 

Then, compare the number of violations for $\Ggiven$ and $\data$ to the number of violations observed for $\Ggiven$ and data \iid sampled from $P^{\mathfrak{F}(\thgiven, \Ggiven)}$ to decide whether the observed faithfulness violations are significant. However, the parameter prior of the sampled FCMs, as well as the chosen model class, plays a crucial role in the violations we observe.

Let us assume that $P$ is induced by some (generally unknown) FCM $\mathfrak{F}(\thtrue, \Gtrue)$. If we now sample an FCM $\mathfrak{F}(\thgiven,\Ggiven)$ with random parameters $\thgiven\sim\hat{\Theta}$, can we use 
\[
\mathbb{E}_{\thgiven\sim\hat{\Theta}}\left[\vlmc[\Ggiven,P^{\mathfrak{F}(\thgiven,\Ggiven)}]\right],~~
\mathbb{E}_{\thgiven\sim\hat{\Theta}}\left[\vmd[\Ggiven,P^{\mathfrak{F}(\thgiven,\Ggiven)}]\right],~~
\]
as a surrogate for the unknown $\vlmc[\Gtrue, P]$ or $\vmd[\Gtrue, P]$ respectively? The following example shows that for $\vmd[\Ggiven, P^{\mathfrak{F}(\thgiven,\Ggiven)}]$ this is not possible in general if the distributions of $\thgiven$ and $\thtrue$ have different priors.

\begin{example}\label{ex:generated_samples}
Assume some true DAG $\Gtrue$ with FCM $\mathfrak{T}=\mathfrak{F}(\thtrue, \Gtrue)$ being a linear-additive noise model, \ie functions $f_i$ defined by $\mathfrak{T}$ are of the form 
\begin{align}
    X_i = \sum_{j\in\Pa{i}} \thtrue_{ij} X_j + N_i, \label{eq:fcm_ex1}
\end{align}
with $\thtrue_{ij}\sim F^{*}$, with $F^{*}$ some probability measure with $\text{supp}(F^{*})=[-\epsilon, \epsilon], 0 \leq \epsilon \leq 1$, and $N_i\overset{\iid}{\sim}\mathcal{N}(\mu, \sigma^2)$. Further assume $\Ggiven$ = $\Gtrue$ and $\mathfrak{G} = \mathfrak{F}(\thgiven, \Ggiven)$ also being a linear-additive noise model, where $\thgiven_{ij}\sim \hat{F}$, with $\hat{F}$ another probability measure with $\text{supp}(\hat{F})=[-1, -\epsilon]\cup[\epsilon, 1]$. We would then find $\vmd[\Gtrue, P^\mathfrak{T}] \geq \vmd[\Ggiven, P^\mathfrak{G}]$.
\end{example}
\begin{proof}
Denote with $\hat{\vars}, \vars^{*}$ the multivariate normal variables introduced by the FCMs $\mathcal{G}, \mathcal{T}$, respectively. Since for all CI tests $X \CI Y | Z$ necessary to compute for $\vmd$ we have $Z=\emptyset$, \cf \eqref{eq:V_md}, we have $X \CI Y \Rightarrow \rho(X,Y)=0$. . 
Let us first look at the bivariate case $X\to Y$. Using \eqref{eq:fcm_ex1} and $N_i\overset{\iid}{\sim}\mathcal{N}(\mu, \sigma^2)$ we have $X^{*}=N_X$ and $Y^{*}=\theta^{*} X^{*} + N_Y$. Thus 
\begin{align}
    \text{cov}(X^{*}, Y^{*}) = \text{cov}(X^{*}, \theta^{*} X^{*} + N_Y) &= \theta^{*} \text{cov}(X,X) + \text{cov}(X^{*}, N_Y)\\\notag
    &= \theta^{*} \sigma^2%
\end{align}
\begin{align}
    \rho_{X^{*}, Y^{*}} = \frac{\text{cov}(X^{*}, Y^{*})}{\sqrt{\text{Var}(X^{*})\text{Var}(Y^{*})}} = \frac{\theta^{*}}{\sqrt{{\theta^{*}}^2 + 1}}
\end{align}
We can find $\rho_{\hat{X}, \hat{Y}}$ analogously and since $|\theta^{*}| < |\hat{\theta}|$ it follows $|\rho_{X^{*}, Y^{*}}| < |\rho_{\hat{X}, \hat{Y}}|$. Whether or not the smaller correlation leads to a violation of faithfulness depends on the parameter $\epsilon$ and the prespecified significance level. The above observation generalizes to arbitrary paths between nodes $i, j\in\an{i}{\graph}$ since by construction all $|\theta^{*}_{kl}| < |\hat{\theta}_{kl}|$.
\end{proof}

A random FCM will generally not entail distributions with similar statistical properties as the empirical distribution. This can be enforced by posing additional restrictions on the FCMs. \Eg to derive a surrogate baseline for the number of faithfulness violations we expect for the true DAG, one could fit FCMs $\mathfrak{G}$ to the observed data by choosing powerful models $f_i(\cdot; \theta_i)$, \eg realized by multi-layer perceptrons (MLPs) and learning the parameters $\theta_i$. If $\Ggiven$ = $\Gtrue$, we are guaranteed to observe $\vmd[\Ggiven, P^\mathfrak{G}] = \vmd[\Ggiven, \data]$ for sufficiently powerful function approximators $f_i$. However, whenever $\Ggiven \neq \Gtrue$, we are not guaranteed to do so. To see this, suppose $\Gtrue$ is given by $X \rightarrow Z, Y$ and $\Ggiven$ is given by $X \rightarrow Z \leftarrow Y$. The distribution $P^\mathfrak{G}$ of a fitted FCM $\mathfrak{G}$ will not satisfy Causal Minimality\footnote{A distribution P over a random vector $X=(X_1,\dots, X_n)$ which is Markovian over a graph $\graph$ satisfies Causal Minimality iff $\forall X_j, \forall Y \in \pa{j}{\graph}$  we have that $X_j \nCI_P Y \mid \pa{j}{\graph}\setminus\{Y\}$ \citep{peters2017}.} \wrt $\Ggiven$ and we will likely observe the faithfulness violation $Z \CI_{P^\mathfrak{G}} Y \not\Rightarrow Z\CI_\Ggiven Y$.
Therefore, we conclude that the aforedescribed approach of deriving a baseline using either sampled or fitted FCMs is not suitable in practice.
\section{Licenses for assets used in our work}\label{sec:app_licenses}
{\bfseries Datasets}
\begin{itemize}
    \item Auto MPG dataset: From the UCI Machine Learning Repository (\url{https://archive.ics.uci.edu/dataset/9/auto+mpg}) licensed under the Creative Commons Attribution 4.0 International (CC BY 4.0) license.
    \item Sachs dataset: From the supplementary material of \citet{sachs2005} \url{https://www.science.org/doi/10.1126/science.1105809?url_ver=Z39.88-2003&rfr_id=ori:rid:crossref.org&rfr_dat=cr_pub%20%200pubmed}. We could not find any licensing information.
    \item APM dataset: The dataset will be published under the Apache-2.0 license.
\end{itemize}

{\bfseries Causal Discovery Algorithms}
\begin{itemize}
    \item LiNGAM: From the \textit{Causal Discovery Toolbox} (\url{https://github.com/FenTechSolutions/CausalDiscoveryToolbox}) licensed under the MIT license.
    \item CAM: From the \textit{Causal Discovery Toolbox} (\url{https://github.com/FenTechSolutions/CausalDiscoveryToolbox}) licensed under the MIT license.
    \item NOTEARS: From the author's official repository (\url{https://github.com/xunzheng/notears}) licensed under the Apache-2.0 license.
\end{itemize}

{\bfseries Conditional independence tests}
\begin{itemize}
    \item KCI: From \textit{DoWhy} (\url{https://github.com/py-why/dowhy}) licensed under the MIT license.
    \item GCM: From \textit{DoWhy} (\url{https://github.com/py-why/dowhy}) licensed under the MIT license.
\end{itemize}

\end{document}